%% file: neurips_2026_arxiv.tex
\documentclass{article}
\PassOptionsToPackage{numbers,compress}{natbib}
\usepackage[preprint]{neurips_2026}
\usepackage[utf8]{inputenc} %
\usepackage[T1]{fontenc}    %
\usepackage{hyperref}       %
\usepackage{url}            %
\usepackage{booktabs}       %
\usepackage{amsfonts}       %
\usepackage{amsmath,amsthm}
\newtheorem{theorem}{Theorem}

\newtheorem{corollary}[theorem]{Corollary}
\newtheorem{definition}[theorem]{Definition}
\newtheorem{assumption}[theorem]{Assumption}
\theoremstyle{remark}
\newtheorem*{remark}{Remark}
\theoremstyle{plain}
\usepackage{nicefrac}       %
\usepackage{microtype}      %
\usepackage{xcolor}         %
\usepackage{soul}           %
\sethlcolor{black!8!white}
\soulregister{\textbf}{1}
\soulregister{\texttt}{1}
\soulregister{\emph}{1}
\usepackage[most]{tcolorbox} %
\usepackage{enumitem}       %
\usepackage{multirow}       %
\usepackage{caption}        %
\usepackage{algorithm}
\usepackage{algpseudocode}
\usepackage{wrapfig}
\usepackage{graphicx}       %
\usepackage{float}          %
\usepackage{tikz}
\usepackage{pifont}

\usetikzlibrary{positioning,fit,backgrounds,calc,arrows.meta,shapes.geometric}
\usepackage{pgfplots}
\pgfplotsset{compat=1.18}
\usetikzlibrary{patterns}

\newtcolorbox{observation}[1][]{%
  colback=black!8!white,
  colframe=black!40!white,
  fonttitle=\bfseries,
  title={#1},
  sharp corners,
  boxrule=1.2pt,
  left=6pt, right=6pt, top=4pt, bottom=4pt,
  toptitle=2pt, bottomtitle=2pt,
}

\title{OpenRFM: Dissecting Relational In-Context Learning}

\author{%
  \bfseries Zhikai Chen$^{1}$\thanks{Equal contribution.}\enspace Junyu Yin$^{4}$\footnotemark[1]\enspace Jialiang Gu$^{4}$\footnotemark[1]\enspace Siheng Xiong$^{2}$ \\
  \bfseries Xiaoze Liu$^{3}$\quad Ruowang Zhang$^{3}$\quad Keren Zhou$^{4}$\quad Kai Guo$^{1}$\thanks{Corresponding author.} \\
  \vspace{2pt} \\
  \normalfont $^{1}$Michigan State University \quad $^{2}$Georgia Institute of Technology \\
  \normalfont $^{3}$Purdue University \quad $^{4}$George Mason University
}

\begin{document}

\maketitle

\begin{abstract}
Relational Foundation Models (RFMs) promise a single pre-trained predictor that, given any relational database, returns predictions in one forward pass via relational in-context learning (ICL). Yet a substantial gap separates open RFMs from their commercial counterparts, and the origin of this gap has not been systematically understood. We dissect a representative framework, the Relational Transformer (RT), from two perspectives.
\textbf{Model side:} we show that RT performs relation-level ICL, and a kernel regression view shows it fails when sparse label-cell coverage yields an underdetermined regression.
\textbf{Data side:} we ablate RT's pre-training source and find that existing synthetic-only pre-training and in-distribution pre-training drive the same architecture into different regimes, lazy vs.\ feature-learning. Probing this gap reveals that the missing ingredient is a support-identifiable relational latent in the label-generation process.
These two diagnoses translate into (1) a dual-stage ICL architecture that combines the relational backbone with a batch-level ICL layer lifted from a pre-trained tabular foundation model to overcome relation-level label scarcity, and (2) a homophily-aware synthetic plus continual real-data pre-training mixture, augmented with a prototype-based regularization. These choices define \textbf{OpenRFM}, a simple yet effective RFM that improves average task performance by approximately $30\%$ over the RT backbone and surpasses the commercial model KumoRFMv1 on a large set of evaluation tasks.
\end{abstract}
\vspace{-1.2em}
\begin{figure}[H]
\centering
\includegraphics[width=\linewidth]{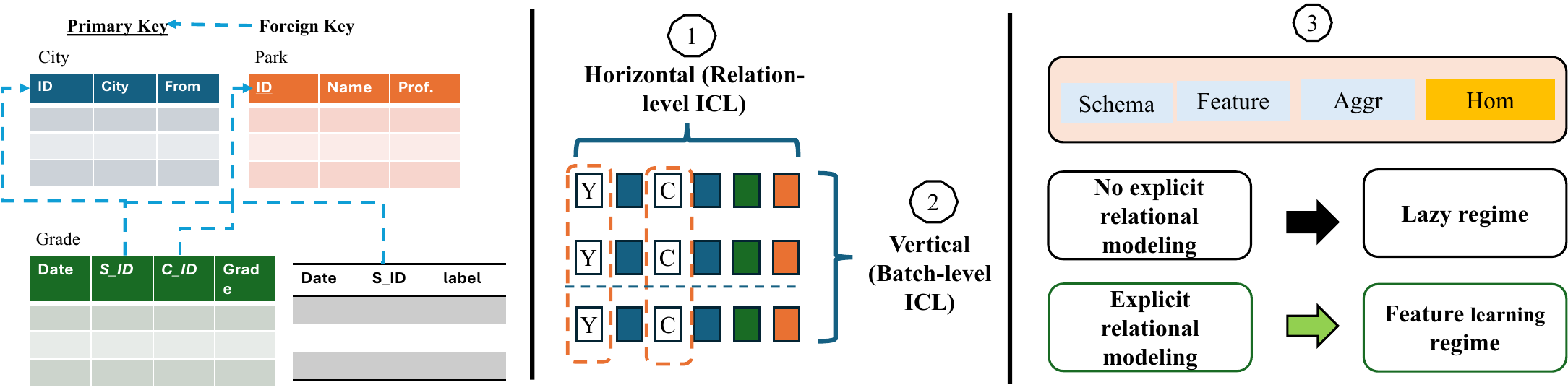}
\caption{\textbf{Overview.} We study RT, a general RFM framework that flattens an RDB into a token sequence via a BFS random walk and relies on this sampled context for in-context prediction. Our analysis makes three points. \textbf{(1)} What RT calls ``zero-shot'' is in fact a relation-level ICL with sampled task-table rows acting as the in-context support. \textbf{(2)} Because RT only has this single relation-level channel, it fails on tasks where the relational walk gathers a sparse signal; adding a vertical batch-level ICL channel is essential. \textbf{(3)} The behaviour of relational ICL is governed by the pre-training prior. The decisive ingredient is \textbf{explicit homophily (``Hom'') control during label generation}: only priors that pin a support-identifiable homophily latent push RT out of the lazy regime into a feature-learning regime that genuinely uses the relational structure. We then build OpenRFM based on these insights.}
\label{fig:overview}
\end{figure}

\input{chapters/introduction}

\input{chapters/preliminaries}

\input{chapters/analysis}

\input{chapters/method}

\input{chapters/experiments}

\input{chapters/conclusions}

\bibliographystyle{plainnat}
\bibliography{references}

\appendix

\input{chapters/appendix}

\end{document}

%% file: chapters/introduction.tex
\section{Introduction}
\vspace{-0.6em}

Relational databases~\citep{codd1970relational} remain the dominant paradigm for storing enterprise data, underpinning applications from healthcare~\citep{johnson2016mimic} and recommendations~\citep{ni2019amazonreviews} to fraud detection~\citep{vesset2018ieeecis}.
These data systems carry two levels of structure: inner-table relations captured by the schema of individual tables and cross-table relations through primary key–foreign key references.
Predictive modeling over such data has traditionally required task-specific models (e.g., XGBoost~\citep{chen2016xgboost}) fed by hand-crafted feature joins and hyperparameter tuning, a pipeline that is labor-intensive and difficult to scale.

Relational Foundation Models (RFMs)~\citep{fey2025kumorfm,xu2026noneedtotrain} address this overhead by pre-training a generalist predictor over a prior of diverse relational data, amortizing inference into one model that applies across domains.
Once pre-trained, an RFM can produce predictions for a new task in a single forward pass through relational in-context learning (ICL).
Recent results show that well-trained RFMs can match or surpass well-tuned task-specific models~\citep{fey2025kumorfm}.

However, promising RFMs today are commercial~\citep{fey2025kumorfm,hudovernik2026kumorfm}, and the only competitive open alternative depends on per-task and CPU-bound feature composition~\citep{xu2026noneedtotrain}. The cause of this gap remains poorly understood, so principled choices for backbone design and pre-training are still out of reach.
For open parametric models, the Relational Transformer (RT)~\citep{ranjan2025relational} offers a general framework for relational ICL, yet it still collapses on many downstream tasks.
The pre-training recipe sits in a similar place: PluRel~\citep{kothapalli2026plurel} adapts TabPFN's prior-fitting paradigm~\citep{hollmann2025tabpfn} to relational data via synthetic databases, sidestepping the scarcity of real relational pre-training data; yet neither purely synthetic nor real-world pre-training delivers satisfactory downstream performance without task-specific continual training. 

To demystify these gaps, we analyze both model and data design, from empirical observation down to mechanistic explanation.
\textbf{(1)~Model side.}
We first observe that RT's failure modes correlate strongly with how many \emph{label-carrying cells} (the ``C'' in the unrolled sequence of Figure~\ref{fig:overview}) land in the sampled context. Building on this signal, we show that RT is performing relation-level ICL: the label-carrying cells gathered along the BFS walk form the support set of an implicit kernel regression, and sparse label-cell coverage therefore yields an underdetermined regression. 
\textbf{(2)~Data side.}
Ablating RT's pre-training source, we find that synthetic-only and real-world in-distribution training drive the same architecture into qualitatively different regimes: a near-fixed, lazy kernel under PluRel~\citep{kothapalli2026plurel} synthetic, and a feature-learning regime~\citep{chizat2019lazy} when pre-training on in-distribution tasks. Probing this gap reveals that the missing ingredient of existing synthetic pre-training is the modeling of latent relational diversity during the causal label-generation process: only when each pre-training episode is parameterized by a support-identifiable relational latent, which we realize through relational homophily~\citep{chen2026relatron}, does RT cross out of the lazy regime.

Each diagnosis points to a concrete design change in the parametric RFM pipeline.
\textbf{(D1) Backbone: dual-stage in-context learning.} The label-scarcity bottleneck identified on the \textbf{model side} is, structurally, a limitation of relying only on relation-level context construction. We therefore introduce dual-stage ICL: relation-level RT blocks are combined with a complementary batch-level ICL layer extracted from a pre-trained tabular foundation model (TFM), so that each query can attend both to its FK neighborhood and to all support rows in the batch.
\textbf{(D2) Pre-training: homophily-aware synthetic plus continual real data.} On the \textbf{data side}, we augment PluRel-style synthetic generation with a homophily-aware diverse corpus, then mix this synthetic pre-training with continual pre-training on real-world databases. We further add a prototype-based regularization to stabilize multi-class readout under varying class-representation geometry. Together, these components define \textbf{OpenRFM}, a simple yet effective recipe for parametric RFMs that improves average task performance by approximately $30\%$ over the RT backbone with more task types supported, and surpasses KumoRFMv1 on a large set of evaluation tasks.

%% file: chapters/preliminaries.tex
\section{Preliminaries}
\label{sec:prelim}

\subsection{Problem Setup and Relational Foundation Models}
\label{sec:prelim:setup}

\textbf{Problem background.}
We follow the relational deep learning (RDL) framework~\citep{fey2024rdl,robinson2024relbench}, in which an RDB is paired with a task table whose rows specify target entities, timestamps, and labels; rows are gathered by a query language~\citep{robinson2024relbench,kocijan2026pql} that traverses the RDB as of each row's timestamp. We evaluate on temporal predictive tasks (e.g., customer churn) and autocomplete tasks (inferring missing attributes from relational evidence).

\input{tables/rfm_overview}

\textbf{Relational foundation models.}
A relational foundation model (RFM) is a single pre-trained model $f_\theta$ that, given an unseen RDB $\mathcal{D}$ and a task table $\mathcal{T}$ specifying a small set of in-context (support) rows $\mathcal{S}\subset\mathcal{T}$ with labels and a query row $x_\star\in\mathcal{T}$, produces a prediction $\hat{y}_\star = f_\theta(x_\star \mid \mathcal{S}, \mathcal{D})$ in a single forward pass without any gradient update~\citep{fey2025kumorfm}.
The criterion is inductive generalization: $\theta$ is learned from a distribution of pre-training databases and tasks, yet must transfer to RDBs with unseen schemas, column semantics, and label spaces at test time.

\textbf{RFM in-context learning design.}
Nonparametric approaches (DFS~\citep{kanter2015dfs}, Juice~\citep{xu2026noneedtotrain}) chain fixed aggregation primitives (\texttt{MEAN}, \texttt{COUNT}, \texttt{MAX}) along foreign-key paths to materialize features, then feed them to a tabular foundation model (TFM) at inference. For parametric counterparts, 
The standard RDL pipeline---a row-level GNN producing one row-level embedding per entity, fed to a task head---is structurally mis-specified for ICL~\citep{xu2026noneedtotrain}. 
Parametric RFMs therefore turn to cell-level designs, attending to cell-level tokens and jointly encoding the query with its in-context support; the Relational Transformer (RT)~\citep{ranjan2025relational}, detailed in Section~\ref{sec:prelim:rt}, instantiates this design and serves as our representative throughout the paper.

\textbf{RFM pre-training design.}
Existing RFMs differ on what to pre-train on and how much downstream adaptation they require.
Griffin~\citep{wang2025griffin} pre-trains only the feature encoder, leaving the GNN to be fine-tuned per database with task-specific data and tuning.
RT~\citep{ranjan2025relational} pre-trains end-to-end on a curated set of real-database tasks, but strong numbers still require per-task checkpoint selection.
To shed dependence on benchmark task tables, RDB-PFN~\citep{wang2026relational} and PluRel~\citep{kothapalli2026plurel} switch to synthetic pre-training data: RDB-PFN fits a PFN-style backbone on DFS-aggregated synthetic features, while PluRel generates native synthetic RDBs and trains RT on them (prior fitting and synthetic-data design are detailed in Section~\ref{sec:pretrain}).
TVE~\citep{truong2025tve} replaces task supervision with a batch-statistics objective, which captures one-hop signal but does not scale to deeper neighborhoods.
We refer to RT trained only on PluRel synthetic data as RT-synthetic and RT trained across all RelBench-v1 tasks as RT-cotrain; the gap between them motivates Section~\ref{sec:analysis}, and Table~\ref{tab:rfm_overview} summarizes where each RFM sits on these axes.

\subsection{The Relational Transformer}
\label{sec:prelim:rt}

\textbf{Relational Transformer.}
Given a target entity $e_\star$ at prediction time $t_\star$, RT~\citep{ranjan2025relational} unrolls the temporally-valid BFS neighborhood of $e_\star$ into a length-$L$ sequence of cell tokens, which are encoded by type-aware encoders. A set of schema-aware sparse masks $\mathcal{M}=\{\mathbf{M}_{\mathrm{col}},\mathbf{M}_{\mathrm{row}},\mathbf{M}_{\mathrm{fk}},\mathbf{M}_{\mathrm{full}}\}$ (column, intra-row, FK-neighbour, full) is applied as sequential masked-attention residuals within each block, followed by a feed-forward residual,
\begin{equation}
\label{eq:rt-layer}
\begin{aligned}
\mathbf{Z}_0 = \mathbf{X}^{(\ell)},\quad \mathbf{Z}_k &= \mathbf{Z}_{k-1} + \mathrm{Attn}\!\big(\mathrm{Norm}(\mathbf{Z}_{k-1});\,\mathbf{M}_k\big),\quad \mathbf{M}_k\in\mathcal{M},\\[-1pt]
\mathbf{X}^{(\ell+1)} &= \mathbf{Z}_{|\mathcal{M}|} + \mathrm{FFN}\!\big(\mathrm{Norm}_{\mathrm{ffn}}(\mathbf{Z}_{|\mathcal{M}|})\big).
\end{aligned}
\end{equation}
$\mathbf{X}^{(M)}$ is the output of the final ($M$-th) block and $\mathbf{X}^{(M)}_\star$ is its target-cell row. Compared to row-level models like GNNs, RT realises the cross-table attention that~\citet{hudovernik2026kumorfm} identifies as the essential mechanism for relational ICL.

\textbf{Pre-training and inference.} RT is pre-trained by masking the label cell $Y_\star$ at the target position in the unrolled context (the $Y$ slot in Figure~\ref{fig:overview}) and predicting it from the remaining tokens. At inference, the readout at the target position emits $\hat{y}_\star$ in a single forward pass. Pre-training data can be real-world (a curated set of RDBs with their task tables) or synthetic~\citep{kothapalli2026plurel}, where schemas, foreign-key edges, per-column feature distributions, and a label column are sampled per episode through a structural causal model. Appendix~\ref{app:plurel_prior} describes the design of PluRel in more details.

%% file: tables/rfm_overview.tex
\begin{wraptable}{r}{0.46\linewidth}
\vspace{-16pt}
\centering
\caption{\small{Design choices of existing RFMs}}
\label{tab:rfm_overview}
\tiny
\setlength{\tabcolsep}{3pt}
\renewcommand{\arraystretch}{0.88}
\begin{tabular}{@{}l l l@{}}
\toprule
Method & Backbone & Pre-train \\
\midrule
DFS~\citep{kanter2015dfs}, Juice~\citep{xu2026noneedtotrain} & Nonparam.\ + tab.\ ICL & None \\
Griffin~\citep{wang2025griffin} & Hetero.\ GNN & Tabular \\
RDB-PFN~\citep{wang2026relational} & PFN on DFS feats & Synth. \\
RT~\citep{ranjan2025relational}, PluRel~\citep{kothapalli2026plurel} & RT & Synth./real \\
KumoRFM~\citep{fey2025kumorfm,hudovernik2026kumorfm} & Cross-table attn.\ & Synth.+real \\
\midrule
\textbf{OpenRFM (ours)} & RT + batch ICL head & Synth.+real \\
\bottomrule
\end{tabular}
\vspace{-14pt}
\end{wraptable}

%% file: chapters/analysis.tex
\section{Diagnosing Relational In-Context Learning}
\label{sec:analysis}

\subsection{Examining the RFM Backbone}
\label{sec:backbone}

We begin with revisiting RT's performance on the popular RelBench-v1 tasks. We compare it to representative baselines including traditional tabular models~\citep{ke2017lightgbm}, GNN training from scratch~\citep{robinson2024relbench}, and a non-parametric relational ICL method, JUICE~\citep{xu2026noneedtotrain}. 
We defer full per-task numbers to Appendix~\ref{app:full_results} and surface in the main text only differentiating tasks (Table~\ref{tab:reachability_flip}), those on which different methods separate by a non-trivial margin.

The key failure mode is that RT underperforms on \texttt{st-outc} and \texttt{st-adv}, trailing even LightGBM, a baseline that discards relational context entirely. Recall (\S\ref{sec:prelim:rt}) that RT's relational signal arrives only when its BFS walk reaches another row of the task table, whose label becomes part of the context. The effectiveness of this mechanism therefore depends on whether such labelled rows are reachable within the walk.
For a task table $\mathcal{T}$ and seeds $\mathcal{S}$, define $\rho(\mathcal{T}) = |\mathcal{S}|^{-1}\sum_{s\in\mathcal{S}}\mathbf{1}[\exists\, v\in\mathrm{walk}(s)\setminus\{s\}: v\in\mathcal{T}]$, where $\mathrm{walk}(s)$ is the set of nodes visited by the relational BFS walk from $s$ (for autocomplete tasks the walk starts from the entity table on which the target column lives). RT excels on high-$\rho$ tasks and collapses on low-$\rho$ ones (Table~\ref{tab:reachability_flip}; demonstrated with RT-synthetic, the pattern persists across pre-training regimes).

\input{tables/rt_corruption}

\subsubsection{Understanding the Mechanism of RT's ICL}
\label{sec:obs_coverage}

\begin{wraptable}{r}{0.46\linewidth}
\vspace{-14pt}
\centering
\footnotesize
\setlength{\tabcolsep}{3pt}
\caption{\small{Results on low-$\rho$ vs.\ high-$\rho$ tasks.}}
\label{tab:reachability_flip}
\resizebox{\linewidth}{!}{%
\begin{tabular}{llcccc}
\toprule
Task ($\rho$) & Metric & LGBM & GNN-sc. & Juice & RT. \\
\midrule
\texttt{st-outc} (.002) & AUC & .701 & .674 & \textbf{.715} & .493 \\
\texttt{st-adv} (.006) & R$^2$ & .307 & .155 & \textbf{.470} & $-$.017 \\
\texttt{d-top3} (.893) & AUC & .739 & .816 & .762 & \textbf{.824} \\
\bottomrule
\end{tabular}%
}
\vspace{-12pt}
\end{wraptable}

To probe \emph{why} RT fails, we run a context-corruption experiment~\citep{wei2023larger} with two augmentations on the walked rows (excluding the seed): (1)~\textbf{shuffle label}: permute label values across neighbours; (2)~\textbf{random features}: permute non-label columns across neighbours. We compare two pre-training regimes: \emph{synthetic-only} on PluRel and \emph{in-distribution}, i.e.\ joint training on all evaluation tasks~\citep{ranjan2025relational}. In-distribution training acts as an upper bound: it leaks task-specific signal at training time and breaks the strict relational-ICL setting.

Table~\ref{tab:rt_corruption} reveals a qualitative split between the pre-training regimes, which we read off as two insights.
\textbf{(I1) RT admits a kernel-regression-like label-transfer view.}
RT's target-cell readout is computed by attention over the unrolled context sequence, and the labeled cells in that sequence are exactly the support $\mathcal{S}$ of reachable rows.
If we abstract the multi-layer backbone into a feature map $\boldsymbol{\phi}(\cdot)$ on rows and isolate the contribution of the final query-to-support attention head that writes into the target cell, then -- linearising over residual / feed-forward branches and assuming the readout is dominated by that head -- the prediction takes the kernel-regression-like form $\hat{y}_\star \approx \sum_{i\in\mathcal{S}} \alpha(x_\star, x_i)\, y_i$ with $\alpha$ given by softmax attention over $\boldsymbol{\phi}$. This is the regime under which transformer attention has been connected to ridge / kernel-regression predictors~\citep{akyurek2023icl,vonoswald2023icl,han2025kernel}.
This lens explains Section~\ref{sec:backbone}: reading the label reachability $\rho$ as the density of available regression targets, low-$\rho$ tasks correspond to a near-empty support, and the regression becomes ill-posed.
\textbf{(I2) The pre-training source determines whether $\boldsymbol{\phi}$ is frozen or learned.}
Under PluRel synthetic pre-training, both shuffling labels and randomizing non-label features yield near-inconsequential drops, the signature of a lazy, frozen-feature regime~\citep{jacot2018ntk,chizat2019lazy} in which $\boldsymbol{\phi}$ is held fixed and the support is read only through its Gram matrix, collapsing the solution toward the mean support label $\bar{y}_{\mathcal{S}}$.
Under in-distribution pre-training, both invariances break and downstream metrics improve uniformly: this suggests the feature-learning regime~\citep{yang2021tp4} where $\boldsymbol{\phi}$ co-adapts with labels.

\subsection{Examining RFM Pre-training}
\label{sec:pretrain}

The corruption probe of \S\ref{sec:obs_coverage} delivers an empirical fact: the same RT crosses between a lazy, kernel-machine regime and a feature-learning regime depending only on the pre-training corpus. The natural question is therefore which property of the pre-training prior is causally responsible for the regime that the population-optimal predictor lives in.

\paragraph{Pre-training through posterior-predictive prior fitting.}
Prior fitting~\citep{muller2022pfn,nagler2023pfn,hollmann2025tabpfn} has been brought to relational data by RDB-PFN~\citep{wang2026relational} and PluRel~\citep{kothapalli2026plurel}: a prior-fitted network drawn from a pre-training prior $\Pi$ over (database, task) pairs approximates the Bayesian posterior predictive
\begin{equation}
\label{eq:pfn_predictive}
q^\star(y^\star \mid x^\star, D) \;=\; \int p(y^\star \mid x^\star, z)\, p(z \mid D)\, dz,
\qquad D=\{(x_i,y_i)\}_{i=1}^n,
\end{equation}
where $z$ is the per-episode data-generating mechanism. Under the standard ICL assumption that $y^\star \perp Y_S \mid z, x^\star, X_S$, the support labels enter only through $p(z\!\mid\!x^\star, X_S, Y_S)$; the corruption probe in \S\ref{sec:obs_coverage} measures the sensitivity of this posterior to perturbations in $Y_S$. When the support adds little information beyond the unlabelled context, $I_\Pi(z;Y_S\!\mid\!x^\star, X_S)\!\approx\!0$, the posterior predictive collapses to a function of the unlabelled context alone, and a lazy / fixed-kernel solution can be population-optimal.

\paragraph{How existing relational ICL instantiates the relational prior?}
What separates a relational prior from a tabular one is whether $z$ encodes cross-row dependence beyond i.i.d.\ rows. PluRel~\citep{kothapalli2026plurel} and RDB-PFN~\citep{wang2026relational} model this cross-row structure through a fixed bipartite hierarchical SBM~\citep{holland1983sbm,karrer2011sbm} for foreign keys plus a fixed family of cross-row aggregators (sum, max, min, mean, product, logsumexp) along a per-column DAG (Appendix~\ref{app:plurel_prior}). These choices give PluRel a strong prior over schemas, FK topology, and per-column feature distributions. What is missing is an explicit relational dependence axis such as homophily sampled per episode as a varying latent that couples a row's label to its FK-neighbours' labels: the realised cross-row label dependence is hard-coded into the SCM rather than carried by such a latent, so under the standard ICL assumption the population-optimal predictor reduces to the support-independent marginal, i.e.\ the lazy regime.

\paragraph{Explicit relation-level modeling reproduces the feature-learning regime.}
We inject a recurring relational latent into the label-generation process by replacing PluRel's label step with a homophily-controlled mechanism: per database we sample a homophily target $h$ uniformly from a $K{=}20$ grid spanning heterophily ($h{<}0$) and homophily ($h{>}0$) (using the adjusted relational homophily of~\citet{chen2026relatron}), and synthesise the label as a calibrated mixture of a cluster-driven and a feature-driven channel so that the realised homophily on the FK adjacency is monotone in $h$, saturating in approximately $[-0.44,\,+0.74]$ (Appendix~\ref{app:diversity_homophily}).
Comparing these two processes, PluRel's label-generating process can be written as
\begin{equation}
Y\;\sim\;p_{\mathrm{PluRel}}(Y\mid E_0,\,\xi),\qquad \xi\sim p(\xi),
\label{eq:pretrain:plurel}
\end{equation}
where $\xi=(\text{schema},\text{column SCM},\text{FK-bipartite-HSBM},\text{aggregator},\dots)$. While $\xi$ does contain relational components, none of them is an explicit, recurring axis that directly controls the dependence between FK-adjacent labels: the realised cross-row label dependence is a downstream by-product of the SCM, not a latent shared across episodes that the support could identify. In our corruption probe (Table~\ref{tab:rt_corruption}), PluRel-trained RT behaves as if the support-label correspondence carries little information about any recurring relational mechanism: shuffling labels and randomising features both yield near-inconsequential drops. We augment PluRel's prior by factoring an explicit, finite, recurring relational latent $h$ alongside $\xi$:
\begin{equation}
h\sim\mathrm{Uniform}(\{h_1,\dots,h_K\}),\qquad Y\;\sim\;p_{\mathrm{ours}}(Y\mid E_0,\,\xi,\,h),
\label{eq:pretrain:ours}
\end{equation}
so that $p(Y\!\mid\!E_0,h)$ realises a target relational dependence. The factor $h$ is concentrated, support-identifiable ($I(h;Y_S\!\mid\!E_0)\!>\!0$ since FK-adjacent label co-occurrences reveal it), and relational (cannot be read from a single row), so the posterior $p(h\!\mid\!E_0,Y_S)$ moves with $Y_S$ and the predictor genuinely uses the support. This is the relational specialisation of latent-task accounts of ICL~\citep{xie2022icl,wies2023learnability,zhang2023icl_bma} and aligns with empirical evidence that ICL behaviour is shaped by the pre-training distribution itself~\citep{chan2022data,han2023supportive,raventos2023task}; the full theoretical treatment is in Appendix~\ref{app:theory}.

Continual pre-training of RT-Plurel on this augmented corpus restores the feature-learning regime under purely synthetic supervision (RT-synth-diverse, Table~\ref{tab:rt_corruption}). This recipe preserves PluRel's schema and feature prior and only injects the missing relational-latent axis on top. Real held-out databases (RT-real-holdout) also escape the lazy regime via their own recurring latents, but those latents are unevenly sampled and they underperform despite operating in the right regime.

%% file: tables/rt_corruption.tex
\begin{table}[t]
\centering
\caption{Context corruption probe across pre-training regimes (\S\ref{sec:obs_coverage}). Per-task \emph{None} (clean), \emph{Shuffle} (label permutation), and \emph{RandFeat} (non-label feature permutation) scores. Tiny gaps indicate a frozen-feature, lazy regime; meaningful degradation is the signature of feature learning. \underline{Underline} marks meaningful degradation. It should be noted that \hl{Feature learning is necessary but not sufficient: under a poorly aligned prior such as RT-real-holdout it can underperform even the lazy baseline.}}
\label{tab:rt_corruption}
\vspace{2pt}
\footnotesize
\setlength{\tabcolsep}{3pt}
\resizebox{\textwidth}{!}{%
\begin{tabular}{l ccc ccc ccc ccc}
\toprule
& \multicolumn{3}{c}{\textbf{driver-dnf} (AUC)} & \multicolumn{3}{c}{\textbf{driver-top3} (AUC)} & \multicolumn{3}{c}{\textbf{driver-position} ($R^2$)} & \multicolumn{3}{c}{\textbf{user-ignore} (AUC)} \\
\cmidrule(lr){2-4} \cmidrule(lr){5-7} \cmidrule(lr){8-10} \cmidrule(lr){11-13}
\textbf{Pre-training regime} & None & Shuffle & RandFeat & None & Shuffle & RandFeat & None & Shuffle & RandFeat & None & Shuffle & RandFeat \\
\midrule
RT-synthetic (PluRel)            & 0.764 & 0.763 & 0.763 & 0.826 & 0.823 & 0.824 & 0.321 & 0.319 & 0.318 & 0.795 & \underline{0.782} & 0.795 \\
RT-cotrain (real-data)           & 0.781 & \underline{0.765} & \underline{0.756} & 0.863 & \underline{0.834} & \underline{0.832} & 0.417 & \underline{0.324} & \underline{0.324} & 0.885 & \underline{0.864} & \underline{0.877} \\
RT-real-holdout (no overlap)     & 0.752 & \underline{0.730} & \underline{0.711} & 0.834 & \underline{0.793} & \underline{0.789} & 0.355 & \underline{0.272} & \underline{0.265} & 0.701 & \underline{0.681} & \underline{0.658} \\
RT-synth-diverse                 & 0.792 & \underline{0.762} & \underline{0.758} & 0.875 & \underline{0.821} & \underline{0.825} & 0.345 & \underline{0.302} & \underline{0.309} & 0.833 & \underline{0.826} & \underline{0.826} \\
\bottomrule
\end{tabular}%
}
\end{table}

%% file: chapters/method.tex
\section{Improving existing RFM with these insights}
\label{sec:method}

Building on the diagnosis in Section~\ref{sec:analysis}, we introduce two modifications to RT: \textbf{batch-level ICL} addresses the context bottleneck identified in Section~\ref{sec:backbone}, and a \textbf{diverse pre-training mixture} addresses the lazy-regime bottleneck of Section~\ref{sec:obs_coverage}.

\subsection{Dual-stage In-Context Learning}
\label{sec:batch_icl}

\begin{wrapfigure}{r}{0.46\linewidth}
\vspace{-8pt}
\centering
\begin{tikzpicture}[
  font=\scriptsize,
  rtblock/.style={draw=black!70,fill=blue!12,rounded corners=2pt,minimum width=3.6cm,minimum height=0.48cm,inner sep=1pt,align=center},
  tabblock/.style={draw=black!70,fill=orange!20,rounded corners=2pt,minimum width=3.6cm,minimum height=0.56cm,inner sep=1pt,align=center},
  headblock/.style={draw=black!70,fill=green!18,rounded corners=2pt,minimum width=3.6cm,minimum height=0.48cm,inner sep=1pt,align=center},
  ioblock/.style={draw=black!60,fill=black!5,rounded corners=2pt,minimum width=3.6cm,minimum height=0.42cm,inner sep=1pt,font=\scriptsize,align=center},
  labelblock/.style={draw=black!50,fill=red!10,rounded corners=2pt,minimum width=1.4cm,minimum height=0.42cm,inner sep=1pt,font=\tiny,align=center},
  adapter/.style={draw=black!70,fill=purple!15,rounded corners=2pt,minimum width=3.6cm,minimum height=0.40cm,inner sep=1pt,align=center},
  arr/.style={-{Latex[length=1.2mm]},line width=0.35pt,black!70,shorten >=0.5pt,shorten <=0.5pt},
  side/.style={-{Latex[length=1.0mm]},line width=0.3pt,red!70!black,dashed,shorten >=0.5pt,shorten <=0.5pt},
  node distance=1.1mm,
]
\node[ioblock] (in) {rel.\ context $\{\mathcal{C}(s_i)\}_{i=1}^B$};
\node[rtblock,above=of in] (rt1) {Relational block};
\node[adapter,above=of rt1] (ad1) {Adapter (down)};
\node[tabblock,above=of ad1] (tab1) {ICLearning};
\node[adapter,above=of tab1] (ad2) {Adapter (up)};
\node[rtblock,above=of ad2] (rt2) {Relational block};
\node[ioblock,above=of rt2,fill=orange!8] (zrefine) {pooled query emb.\ $\tilde z_{1\ldots B}$};
\node[headblock,above=of zrefine] (head) {TabICL readout};
\node[ioblock,above=of head] (out) {$\hat y_{1\ldots B}$};
\foreach \a/\b in {in/rt1,rt1/ad1,ad1/tab1,tab1/ad2,ad2/rt2,rt2/zrefine,zrefine/head,head/out}
  \draw[arr] (\a) -- (\b);
\end{tikzpicture}
\vspace{-6pt}
\caption{Design space of dual-stage relational ICL. Row-wise relational blocks (blue) refine tokens within each query's relational context. The ICLearning layer (orange) is TabICL's batch-level cross-row attention.}
\label{fig:interleaved}
\vspace{-14pt}
\end{wrapfigure}

Section~\ref{sec:backbone} establishes that RT's single-walk context is structurally impoverished on low-reachability tasks, and simply increasing the sampling sequence length $L$ brings almost no gain when the underlying subgraph is sparse in labeled task-table rows.
As an alternative, we add a second in-context dimension: a batch-level attention operating across a batch of $B$ independently sampled queries (each carrying its own relational context).
The in-context supervisory signal can now come from non-neighboring examples that are available regardless of walk structure.
We refer to this architecture as \textbf{dual-stage relational in-context learning}.
The remaining question is how to implement it efficiently.
We identify two candidates:
(1)~\textbf{end-to-end 2D attention}, which jointly attends over the per-query relational context and the cross-query batch context in a single transformer with a full two-dimensional attention pattern---the most expressive option;
(2)~\textbf{interleaving with existing tabular foundation model blocks}, which alternates relational-context attention (within each query) and batch-context attention (across queries) layer by layer, preserving cross-query information flow at intermediate cost. For this design, we can view tabular foundation model as universal predictor over numerical values and reuse them. 
Option~(1) is the most expressive yet fails in practice.
Trained for 20{,}000 steps, its loss reaches $0.010$ while downstream regression collapses to $R^2\!=\!-4.3$ with near-constant predictions.
We attribute this to a bootstrapping problem: the backbone only gets a useful gradient once the batch-level head responds to in-context labels, and the head only gets one once the backbone produces query-discriminative embeddings.

\textbf{Integration with TabICL.} We instantiate the recipe with TabICL~\citep{qu2025tabicl} and ablate the TabPFN~\citep{hollmann2025tabpfn} variant in Appendix~\ref{app:full_results}. TabICL is composed of three modules: a column encoder (\textsc{ColEmbedding}, a weight-shared encoder applied to each feature column), a row-interaction stage (\textsc{RowInteraction}, a permutation-invariant attention over cells within a row), and an ICL stage (\textsc{ICLearning}, a batch-level attention that conditions on the support labels). Most other tabular foundation models share this three-stage decomposition, so the integration recipe below transfers to them. The simplest architecture is the \textsc{detached} variant: we extract the pooled RT representation of each query and feed it into the full TabICL stack as if it were a row of an external table; alternatively we can bypass the column-encoder and row-interaction stages and apply only the ICLearning layer through a linear adapter. For higher expressiveness, we interleave ICLearning layers between RT blocks (Figure~\ref{fig:interleaved}); this requires a dimension-matching adapter on each side of the spliced layer, which we activate by pre-training on held-out real-world databases. We conduct studies on these choices in Section~\ref{sec:exp:abl_splice}. 

\textbf{Improving context length.} TabICLv2's context limit is only $2048$ rows, which is small compared to most large task tables that can scale to millions of rows. In practice, however, we find that this context size is already sufficient for most tasks (Section~\ref{sec:experiments}). When the effective context length does become a bottleneck, a simple and useful technique is support-level ensembling: rather than feeding a single support draw, we select multiple disjoint groups of support examples from the task table, run the model independently on each, and average the resulting predictions. This expands the effective number of support rows the prediction sees without changing the architecture. We ablate the trade-off between single-draw context size and ensembling in Section~\ref{sec:exp:abl_context}.

\subsection{Better Pre-training Design}
\label{sec:pretrain_mix}
\label{sec:method_diversity}
\label{sec:method_objective}

We continue pre-training from the PluRel checkpoint on a mixture of two corpora.
(i)~Real-world held-out databases. We use rel-amazon, dbinfer-outbrain-small, dbinfer-diginetica, and dbinfer-retailrocket, all disjoint from the evaluation tasks. Real-world databases provide useful signals for processing dense text embeddings.
(ii)~Homophily-aware synthetic corpus, as is described in Section~\ref{sec:pretrain}.
Real and synthetic are mixed at the episode level during pre-training.

\paragraph{Enhancing multi-class representations with a prototype objective.}
RT's cell-level readout heads cover only the binary and regression cases. To support the pre-training on multi-class tasks, we reuse RT's existing readout via a prototype-based auxiliary objective~\citep{snell2017prototypical}: within each batch we sample an $N$-way episode, form per-class prototypes $\mu_c = |\mathcal{S}_c|^{-1}\sum_{i\in\mathcal{S}_c} z_i$ from target-cell embeddings $z_i$, and train each query $(x_q, y_q)$ with $\mathcal{L}_{\mathrm{proto}} = -\log\mathrm{softmax}_c\!\big(\cos(z_q,\mu_c)/\tau\big)_{y_q}$ at temperature $\tau$. On the \textsc{rel-salt} 16-way tasks (\texttt{i-incot}, \texttt{s-shipc}), this auxiliary lifts linear-probe accuracy by $+0.04$ over the binary/regression-only baseline.

%% file: chapters/experiments.tex
\section{Experiments}
\label{sec:experiments}

We evaluate our proposed model, \textbf{OpenRFM}, against strong baselines on a broad suite of predictive tasks drawn from RelBench-v1~\citep{robinson2024relbench}, Kaggle, and RelBench-v2~\citep{gu2026relbenchv2}. We use the publicly reported numbers in these benchmarks to filter out tasks on which nearly all competent baselines already saturate or have potential leakage issues. The evaluation suite contains $24$ tasks in total: $14$ binary-classification tasks, $2$ multi-class classification tasks, and $8$ regression tasks. Each task is evaluated under both the autocomplete and regression settings introduced in \S\ref{sec:method}.

\paragraph{Settings.} We compare OpenRFM against the following families of baselines: (1) \textbf{training from scratch}: per-task GNN-style models trained on each downstream task in isolation: RDL-GNN~\citep{robinson2024relbench}, NBFNet~\citep{zhu2021nbfnet}, RelGT~\citep{dwivedi2025rgt}, RT~\citep{ranjan2025relational}, and RelGNN~\citep{chen2025relgnn}; (2) \textbf{non-parametric RFM}: JUICE~\citep{xu2026noneedtotrain} and RDB-PFN~\citep{wang2026relational}, a prior-data-fitted network pretrained on synthetic relational databases. We apply $8$ group ensembling to these methods. (3) \textbf{pre-trained relational transformer}: RT-Plurel~\citep{kothapalli2026plurel}, pretrained on the PluRel synthetic corpus; and (4) \textbf{``foundation model'' that needs fine-tuning}: Griffin~\citep{wang2025griffin}, which only pre-trains the encoder on a set of real-world tabular data. It needs to be fine-tuned and does not support in-context learning. (5) the \textbf{commercial RFM} KumoRFMv1. Pre-training of OpenRFM is carried out on B200 GPUs and downstream task evaluation is conducted on a mixture of H100 and A6000 GPUs. To pretrain the RT backbone of OpenRFM, we find the best strategy is to continual pre-training from a PluRel checkpoint, which has already been pre-trained on $512$ synthetic databases with $16$b token costs. Training is using AdamW optimizer with a OneCycleLR schedule with batch size $128$, sequence length $1024$, and bf16 activations, run for $40\,000$ optimiser steps. 

\subsection{Overall ranking Results}
\label{sec:exp:main}

Figure~\ref{fig:exp1_perf_rank} gives the comparison across all 16 classification and 8 regression tasks. Missing cells (NS/OOM) are assigned the worst rank on that task. Full numerical results can be found in Appendix~\ref{app:full_results}. 
We observe: \textbf{(1)~OpenRFM is competitive across the board.} Among all evaluated methods, OpenRFM attains a strong aggregate ranking on both classification and regression. \textbf{(2)~Relational ICL matches train-from-scratch with far smaller context.} On \texttt{t-fraud} (rel-ieeecis), the training table has hundreds of thousands of rows, yet ICL methods (OpenRFM, KumoRFMv1) match train-from-scratch baselines using a limited support set. This suggests that for many predictive tasks the bottleneck is which mechanisms the support exposes, not how many rows are seen. \textbf{(3)~Parametric RFMs win when downstream signal is scarce.} On smaller-scale databases such as \texttt{rel-f1}, where each task table provides limited labeled rows, parametric pre-trained models (OpenRFM, KumoRFMv1) clearly outperform both non-parametric ICL (JUICE) and train-from-scratch baselines.  \textbf{(4)~Very-large-$K$ classification remains an open limitation.} Tasks from \textsc{rel-salt} exceed the fixed class budget of TabICL-style readouts (10-way pre-trained head); we currently fall back to one versus all prediction. Closing this gap likely requires both architectural support for many-class readouts and pre-training data that exposes such priors at scale, a direction for follow-up work.

\begin{figure}[t]
\centering
\begin{minipage}[t]{0.49\textwidth}
\vspace{0pt}
\centering
\input{figures/exp1_perf_rank}
\caption{\textbf{EXP1.} Overall mean rank across all 24 tasks. \underline{Underlined} methods are pre-trained foundation models that require no per-task training. The OpenRFM bar uses the TabPFN head here.}
\label{fig:exp1_perf_rank}
\end{minipage}\hfill
\begin{minipage}[t]{0.49\textwidth}
\vspace{0pt}
\centering
\captionof{table}{\small{\textbf{End-to-end efficiency.} The text encoding take 14 minutes so actually JUICE's materialization is the slowest.}}
\label{tab:exp_efficiency}
\scriptsize
\setlength{\tabcolsep}{2pt}
\begin{tabular}{@{}l c c c c@{}}
\toprule
\textbf{Method} & Mater. & Train & Inf. & \textbf{Total} \\
\midrule
\multicolumn{5}{@{}l}{\scriptsize\textit{text encoding (except JUICE) takes 14m}} \\
\multicolumn{5}{@{}l}{\textit{Pre-trained Prior (no training)}} \\
\textbf{OpenRFM} & 16.5\,m & 0 & 4.7\,m & \textbf{21.2\,m} \\
JUICE / fastdfs  & 8.8\,m  & 0 & 4.8\,m & \textbf{13.6\,m} \\
\midrule
\multicolumn{5}{@{}l}{\textit{Per-task trained (10 HPO trials)}} \\
RDL-GNN & 18.4\,m & 14.9\,h & 1.5\,m & \textbf{15.2\,h} \\
RelGT   & 18.1\,m & 17.5\,h & 3.9\,m & \textbf{17.8\,h} \\
Griffin & \multicolumn{3}{c}{\textit{$>$1-day budget}} & \textbf{$>$1\,d} \\
\bottomrule
\end{tabular}

\vspace{0.6em}

\captionof{table}{\small{\textbf{Pre-training ablation.} ``$+$'' rows cumulatively add the listed ingredient to the row above.}}
\label{tab:abl_pretrain_v2}
\scriptsize
\setlength{\tabcolsep}{2pt}
\begin{tabular}{@{}lccc@{}}
\toprule
Setup & Bin-AUC & MRR & $R^2$ \\
\midrule
(a) RT-Plurel                       & 0.66          & N/A           & 0.11          \\
(b) + TabICL ICL                    & 0.76          & 0.58          & 0.12          \\
\midrule
(c) RT-Ours (synth.) + TabICL ICL   & 0.78          & 0.56          & 0.13          \\
(d) + real-world data               & 0.75          & 0.50          & 0.12          \\
(e) + prototype loss                & 0.79          & 0.58          & 0.14          \\
(f) + ensembling                    & \textbf{0.80} & \textbf{0.58} & \textbf{0.23} \\
\bottomrule
\end{tabular}
\end{minipage}
\end{figure}

\subsection{Pipeline efficiency}
\label{sec:exp:efficiency_robustness}

Beyond predictive accuracy, pipeline efficiency is also a critical consideration. We compare three paradigms --- per-task train-from-scratch GNNs, the non-parametric JUICE pipeline, and OpenRFM --- and decompose the wall-clock into (a)~preparation (materialization, on-disk storage, and any task-specific training) and (b)~inference latency.
For the benchmark we pick \textsc{rel-stack}, a large-scale RelBench-v1 database with three task tables (\texttt{u-badge}, \texttt{u-engage}, \texttt{p-votes}), which exercises both per-task and per-database costs at realistic scale. All measurements are on a single NVIDIA RTX A6000 (48\,GB) with an AMD EPYC 7763 64-core CPU and 1\,TB of system memory.

As shown in Table~\ref{tab:exp_efficiency}, we observe: (1)~the foundation-model paradigm is promising and clearly reduces task-specific preparation time compared to train-from-scratch baselines; (2)~against the non-parametric Juice, parametric pre-training does not require per-task materialization and is therefore more storage-efficient; it is also more GPU-friendly and can natively handle dense embeddings produced by text or image foundation models; (3)~on a single GPU, OpenRFM's per-task inference is slower than a row-level GNN's forward pass because cell-level cross-table attention is intrinsically more expensive, which is the price of performing relational ICL inside the model.

\subsection{Ablation Studies}
\label{sec:exp:ablation}

To better understand the components of OpenRFM, we further conduct ablation studies on three design dimensions: (1) pretraining, (2) relational ICL design, and (3) context selection. 

\paragraph{(1) Pre-training corpus and objective.}
\label{sec:exp:abl_pretrain}
We isolate the contribution of each ingredient. Holding the downstream pipeline (RT+TabICL, support size $K{=}1024$) fixed, we sweep six setups in Table~\ref{tab:abl_pretrain_v2}: (a)~the original RT-Plurel backbone; (b)~adding TabICL batch-level attention on top of (a); (c)~RT pre-trained with our prior plus TabICL; (d)~(c) further augmented with real-world data; (e)~(d) with the prototype auxiliary loss added; and (f)~(e) with 8-seed support-side ensembling at inference. We report aggregate results by task family. The dual-stage in-context learning step contributes the largest single performance gain. Support-side ensembling, in contrast, yields only a marginal improvement on classification but a substantial one on regression. Moreover, blindly including real-world data hurts the performance, while organizing them with auxiliary prototype loss helps. 

\paragraph{(2) Relational ICL design.}
\label{sec:exp:abl_splice}
We next ask how the tabular ICL module should interact with RT. Table~\ref{tab:abl_splice_v2} compares four bridge designs: the detached bridge that feeds normalized final RT embeddings through the original TabICL encoder, the same bridge without normalization, a learned linear adapter that bypasses TabICL's encoder and uses only ICLearning, and the best adapter-based interleaving over first, middle, and first-and-last RT positions. The evidence points to a simple design rule. Interleaving has limited and task-dependent benefit. By contrast, normalization is critical for stability: removing it makes some tasks drop a lot. The original TabICL encoder is much stronger than a linear RT$\rightarrow$ICLearning adapter. This results in our final design: RT connected to TabICL/TabPFN with a normalization layer in the middle. 

\paragraph{(3) Context selection: support size, ensembling, and retrieval.}
\label{sec:exp:abl_context}
We then further explore the context selection strategy. Backbone setting is identical to \ref{sec:exp:abl_pretrain}'s (1). We consider support size $K\!\in\!\{256, 512, 1024\}$, ensemble breadth $E\!\in\!\{1, 8\}$, and the selection rule (\textsc{random}, \textsc{balanced}, \textsc{temporal}). \textsc{temporal} picks the $K$ most recent train rows by parquet timestamp. We report on two representative tasks: \texttt{rel-stack/u-badge} ($\sim$5\% positive class) and \texttt{rel-ieeecis/t-fraud} (clf, AUC; $\sim$3.4\% positive). Two trends emerge: (i)~ensembling ($E{=}8$) consistently adds AUC. On the imbalanced \texttt{u-badge} it stabilizes balanced-controller variance; (ii)~the \textsc{balanced} controller dominates at large $K$, achieving the best AUC on both tasks, which is eventually our support selection strategy.
Here, we only consider query-agnostic context selection considering the KVcache. Query-aware context selection is a meaningful future work. 

\begin{table}[h]
\centering
\begin{minipage}[t]{0.44\textwidth}
\vspace{0pt}
\centering
\captionof{table}{RT--TabICL bridge design.}
\label{tab:abl_splice_v2}
\scriptsize
\setlength{\tabcolsep}{1.5pt}
\renewcommand{\arraystretch}{0.95}
\resizebox{\linewidth}{!}{%
\begin{tabular}{@{}llcccc@{}}
\toprule
Metric & Task & Orig. & no-norm & Lin. & Best inter. \\
\midrule
\multirow{2}{*}{AUC}
 & t-fraud & \textbf{.891} & .854 & .738 & \textbf{.891} \\
 & d-top3  & .856 & .854 & .700 & \textbf{.870} \\
\midrule
\multirow{4}{*}{$R^2$}
 & i-sales & \textbf{.287} & .104 & .161 & .286 \\
 & p-votes & .296 & .083 & .141 & \textbf{.296} \\
 & u-attend & .042 & .053 & .049 & \textbf{.072} \\
 & st-adv  & .163 & .090 & .073 & \textbf{.164} \\
\bottomrule
\end{tabular}
}
\end{minipage}\hfill
\begin{minipage}[t]{0.54\textwidth}
\vspace{0pt}
\centering
\captionof{table}{\textbf{Context selection} on two representative clf tasks (AUC).}
\label{tab:abl_context_v2}
\scriptsize
\setlength{\tabcolsep}{1.5pt}
\renewcommand{\arraystretch}{0.95}
\resizebox{\linewidth}{!}{%
\begin{tabular}{@{}llcccccc@{}}
\toprule
& & \multicolumn{3}{c}{$E{=}1$} & \multicolumn{3}{c}{$E{=}8$} \\
\cmidrule(lr){3-5}\cmidrule(lr){6-8}
Task & Rule & $K{=}256$ & $512$ & $1024$ & $K{=}256$ & $512$ & $1024$ \\
\midrule
\multirow{3}{*}{u-badge}
 & rand. & .69{\tiny$\pm$.03} & .77{\tiny$\pm$.05} & .77{\tiny$\pm$.03} & .82{\tiny$\pm$.03} & .85{\tiny$\pm$.04} & .85{\tiny$\pm$.04} \\
 & bal.  & .77{\tiny$\pm$.06} & .70{\tiny$\pm$.05} & .75{\tiny$\pm$.03} & .83{\tiny$\pm$.02} & .84{\tiny$\pm$.01} & \textbf{.86}{\tiny$\pm$.02} \\
 & temp. & .79{\tiny$\pm$.05} & .79{\tiny$\pm$.05} & .79{\tiny$\pm$.04} & .81{\tiny$\pm$.02} & .83{\tiny$\pm$.04} & .84{\tiny$\pm$.03} \\
\midrule
\multirow{3}{*}{t-fraud}
 & rand. & .70{\tiny$\pm$.03} & .75{\tiny$\pm$.03} & .86{\tiny$\pm$.03} & .81{\tiny$\pm$.02} & .82{\tiny$\pm$.02} & .88{\tiny$\pm$.02} \\
 & bal.  & .80{\tiny$\pm$.02} & .80{\tiny$\pm$.02} & .89{\tiny$\pm$.01} & .84{\tiny$\pm$.02} & .87{\tiny$\pm$.02} & \textbf{.90}{\tiny$\pm$.02} \\
 & temp. & .79{\tiny$\pm$.02} & .83{\tiny$\pm$.02} & .88{\tiny$\pm$.02} & .85{\tiny$\pm$.03} & .88{\tiny$\pm$.02} & \textbf{.90}{\tiny$\pm$.02} \\
\bottomrule
\end{tabular}}
\end{minipage}
\end{table}

%% file: figures/exp1_perf_rank.tex
\begin{tikzpicture}[every node/.style={font=\scriptsize}]
\begin{axis}[
  width=\linewidth, height=6.4cm,
  xbar,
  bar width=7pt,
  enlarge y limits=0.07,
  xmin=4, xmax=12,
  xtick={4,5,6,7,8,9,10,11},
  xmajorgrids, grid style={gray!20},
  ytick=data,
  symbolic y coords={%
    \underline{RT-Plurel},%
    Griffin,%
    \underline{KumoRFMv1},%
    \underline{RDB-PFN},%
    RelGT,%
    RT,%
    RDL-GNN,%
    \underline{JUICE},%
    RelGNN,%
    NBFNet,%
    \underline{OpenRFM}%
  },
  y dir=reverse,
  y tick label style={font=\tiny, inner sep=1pt},
  xlabel={Mean rank ($\downarrow$, 24 tasks)},
  xlabel style={font=\scriptsize},
  tick label style={font=\tiny},
  nodes near coords,
  nodes near coords style={font=\tiny, /pgf/number format/fixed,
                            /pgf/number format/precision=2, xshift=1pt},
  every node near coord/.append style={anchor=west},
]
\addplot[fill=red!75!black, draw=red!50!black] coordinates {
  (4.92,\underline{OpenRFM})
  (5.38,NBFNet)
  (5.65,RelGNN)
  (5.79,\underline{JUICE})
  (6.58,RDL-GNN)
  (7.19,RT)
  (7.54,RelGT)
  (7.63,\underline{RDB-PFN})
  (7.73,\underline{KumoRFMv1})
  (9.44,Griffin)
  (10.56,\underline{RT-Plurel})
};
\end{axis}
\end{tikzpicture}

%% file: chapters/conclusions.tex
\section{Conclusion and Discussion}
\label{sec:conclusion}

We diagnose two bottlenecks in open RFMs --- relation-level label scarcity and a lazy pre-training regime --- and address them with dual-stage ICL and a homophily-aware synthetic-plus-real mixture. Together they define \textbf{OpenRFM}, a single open backbone that improves average performance by ${\sim}30\%$ over RT and surpasses the commercial KumoRFMv1 on most evaluation tasks.

\paragraph{Discussion: deeper understanding of relation-level and batch-level treatment.}
One phenomenon we observe but do not yet fully understand is that the gap between RT-Plurel and RT-diverse-syn shrinks once a batch-level attention head is attached on top: even a relatively weak relational backbone becomes competitive when it is paired with a pre-trained tabular ICL layer. We believe reusing an existing tabular foundation model is the right direction, but the mechanism behind this complementarity --- in particular which signal each stage actually contributes, and whether there is a tighter coupling that does better than the current splice --- remains open, and we leave a more principled study of the combination to future work.

\paragraph{Discussion: extending to recommendation.}
The pipeline does not extend automatically to recommendation. KumoRFM\,v1 handles it via an NBFNet-like row-level graph transformer with an inductive feature encoder~\citep{fey2025kumorfm}, but such models underperform on general recommendation (e.g., Amazon-book~\citep{yuan2024contextgnn}) where transductive id-based methods dominate; KumoRFM\,v2 omits recommendation entirely~\citep{hudovernik2026kumorfm}. It's challenging to design recommendation foundation model since the discriminative signal for recommendation lives in the relational structure, not in raw cells. We sketch an agent-RFM pipeline in which a coding agent materialises relational signals into per-pair tabular features, after which the standard pipeline applies; Appendix~\ref{app:rec_icl} demonstrates this on \texttt{rel-hm/user-item-purchase}, reaching MAP@$12$ comparable to KumoRFM\,v1 and to transductive NBFNet baselines.

%% file: chapters/appendix.tex
\section{Broader Impacts}
\label{app:broader_impacts}

OpenRFM is a foundational predictive model over relational databases. On the positive side, by open-sourcing the model and its synthetic-data pipeline, we lower the engineering barrier to relational predictive modelling and provide a strong non-commercial baseline accessible to researchers and organisations without bespoke ML teams. On the risk side, OpenRFM inherits the standard concerns of any tabular/relational predictor deployed on consequential decisions---fairness, privacy of in-context support records, and over-reliance on in-context predictions in high-stakes settings---and downstream users should audit accordingly.

\section{More Related Works}
\label{app:more_related}

\paragraph{Relational deep learning.}
The RDL paradigm builds on graph neural networks (GNNs) as the primary modeling tool, with RelBench~\citep{robinson2024relbench,gu2026relbenchv2} and 4DBInfer~\citep{wang2024fourdbinfer} providing a standardized benchmark. Recent advances include composite message passing schemes that capture higher-order relational patterns~\citep{chen2025relgnn} and specialized architectures for recommendation over relational databases~\citep{yuan2024contextgnn}. Transformers and large language models have also been explored for relational prediction~\citep{dwivedi2025rgt,wu2025llmrelational,wydmuch2024llmrdb}, though they tend to be resource-heavy with modest performance gains. Related efforts such as AutoG~\citep{chen2025autog} and RDB2G-Bench~\citep{choi2025rdb2gbench} address an upstream bottleneck for the entire RDL stack: they study how a database should be turned into a graph in the first place, either by automatically generating schemas with LLM-driven transformations (AutoG) or by benchmarking the choice of graph modelling against downstream task quality (RDB2G-Bench).

\paragraph{Prior-fitted networks for graphs.}
A separate line of recent work transplants the PFN recipe to graph node classification. This shares some similarity to relational ICL since both of them can be seen as a 2D ICL problem. G2T-FM~\citep{eremeev2025g2tfm} adapts a tabular foundation model such as TabPFN to graphs by aggregating neighbour features into the row of a synthetic table and reusing the tabular ICL engine. GraphPFN~\citep{eremeev2025graphpfn} pre-trains a tabular backbone augmented with attention-based neighbourhood aggregation on synthetic graphs drawn from a hierarchical SBM plus preferential-attachment prior. NodePFN~\citep{choi2026nodepfn} pre-trains a context-query attention architecture with local message passing on thousands of synthetic graphs. These three share the high-level PFN philosophy with our work but operate in a different in-context regime: the support set is constructed at the \emph{node} level (the model sees an entire small graph plus a labelled subset of its nodes), and the labels enter as a separate input channel rather than as cells inside the unrolled sequence. The relational walks we consider, by contrast, treat support labels as cells \emph{inside} the BFS-unrolled context (Section~\ref{sec:prelim:rt}). Empirically, the graph-PFN line has so far been evaluated on small-to-medium-scale benchmarks with at most $\sim\!10^5$ nodes per graph. 

\paragraph{Graph foundation models and the geometric-symmetry route.}
A parallel line of work pursues graph foundation models through the lens of \emph{geometric symmetry}. ULTRA~\citep{galkin2024ultra} learns transferable knowledge-graph representations by conditioning on a relation graph and applying NBFNet-style message passing, reaching strong zero-shot transfer across different KGs. \citet{mao2024gfm} and \citet{wang2025gfmsurvey} considers building a single backbone whose architecture respects the relevant symmetries (permutation, relabeling, and structure-level invariances) so that one pre-trained model transfers to any graph. The empirical record so far is that convincing zero-shot transfer has been observed almost exclusively on knowledge-graph reasoning tasks (entity-pair link prediction over a fixed relation vocabulary), while transfer to general node-, link-prediction has remained much harder. \citet{bevilacqua2025holographic} go further and argue that this is structural rather than empirical: different task levels (node, edge, higher-order) require different permutation symmetries that are difficult to reconcile inside a single backbone, so a symmetry-respecting universal architecture is intrinsically constrained. 

\section{Full Per-Task Results on RelBench-v1}
\label{app:full_results}

Table~\ref{tab:relbenchv1_full} reports the per-task numbers underlying the discussion in Section~\ref{sec:backbone}.

\input{tables/exp1_relbenchv1}

\section{Theoretical Discussion of Section~\ref{sec:analysis}}
\label{app:theory}

This appendix formalises the claims in Section~\ref{sec:analysis} about when a relational transformer (RT) actually learns from in-context support labels. We organise the theory around three points that mirror the empirical observations.
(i)~RT pre-training is \emph{prior-data fitting}~\citep{muller2022pfn,nagler2023pfn,hollmann2025tabpfn}; in the lazy / NTK regime~\citep{jacot2018ntk,chizat2019lazy} it reduces to a fixed-kernel predictor over relational episodes.
(ii)~The pre-training prior places RT in either the \emph{lazy} or the \emph{feature-learning} regime through a single statistical property: whether support labels are posterior-informative about a recurring relational latent.
(iii)~Homophily is one instantiation of such a latent; the marginal spread of any scalar homophily statistic is neither necessary nor sufficient for feature-learning ICL.
Throughout, $\mathbb{E}_\Pi$, $H_\Pi$, $I_\Pi$ denote expectation, entropy, mutual information under prior $\Pi$. Random variables are upper-case, realisations lower-case.

\subsection{Relational prior-data fitting and the lazy kernel limit}
\label{app:theory:lazy}

\paragraph{Setup.}
A relational episode is a tuple $E=(E_0,Y_S,Y_q)$ where $q$ is the query row, $S=\{1,\dots,m\}$ indexes the labelled support rows reachable from $q$ through the relational context, $Y_S=(Y_1,\dots,Y_m)$ are their labels, and $Y_q$ is the query label. The unlabelled context $E_0=(q,G,X,A,S)$ collects the sampled relational neighbourhood $G$, non-label cells $X$, structural addresses $A=(A_1,\dots,A_m)$ (table identity, hop distance, FK path, timestamp bucket, label-column role), and the support index set $S$.
A \emph{relational pre-training prior} $\Pi$ is a distribution over data-generating mechanisms $\eta=(\xi,z)$, where $\xi$ encodes nuisance structure (schema, column distributions, noise) and $z$ is the episode-level relational mechanism --- a path-conditioned aggregation rule, a temporal window, a label-compatibility matrix, or a homophily level. Conditional on $\eta$, episodes are drawn from $P_\eta$. Pre-training minimises
\begin{equation}
\mathcal{R}_\Pi(f) = \mathbb{E}_{\eta\sim\Pi}\,\mathbb{E}_{E\sim P_\eta}\!\big[\ell\big(f(E_0,Y_S),\,Y_q\big)\big].
\label{eq:theory:risk}
\end{equation}
Under log loss, the population minimiser is the posterior predictive
\begin{equation}
q_\Pi^\star(y\mid E_0,Y_S) = \int p_\eta(Y_q{=}y\mid E_0,Y_S)\,\Pi(\mathrm{d}\eta\mid E_0,Y_S),
\label{eq:theory:bayes}
\end{equation}
and under squared loss it is $\mathbb{E}_\Pi[Y_q\mid E_0,Y_S]$. To see this, marginalise $\eta$ via the tower property to obtain
\begin{equation}
p_\Pi(Y_q\mid E_0,Y_S) = \int p_\eta(Y_q\mid E_0,Y_S)\,\Pi(\mathrm{d}\eta\mid E_0,Y_S),
\label{eq:theory:posterior_predictive}
\end{equation}
and rewrite~\eqref{eq:theory:risk} as
\begin{equation}
\mathcal{R}_\Pi(f) = \mathbb{E}_{(E_0,Y_S)}\!\Big[\,\mathbb{E}_{Y_q\sim p_\Pi(\cdot\mid E_0,Y_S)}\!\big[\ell(f(E_0,Y_S),\,Y_q)\big]\Big].
\label{eq:theory:risk_decomp}
\end{equation}
For log loss the inner expectation is the cross-entropy $\mathrm{H}(p_\Pi,f) = \mathrm{H}(p_\Pi) + \mathrm{KL}(p_\Pi\,\|\,f)$, minimised pointwise at $f = p_\Pi$ by Gibbs' inequality. For squared loss the same decomposition gives $\mathbb{E}[(Y_q-f)^2\mid E_0,Y_S] = \mathrm{Var}[Y_q\mid E_0,Y_S] + (f-\mathbb{E}[Y_q\mid E_0,Y_S])^2$, minimised at $f = \mathbb{E}_\Pi[Y_q\mid E_0,Y_S]$. A trained RT is therefore an amortised Bayes act for the statistical experiment induced by $\Pi$~\citep{muller2022pfn,nagler2023pfn,hollmann2025tabpfn,xie2022icl}.

\paragraph{The label-transfer kernel.}
The corruption probes of Section~\ref{sec:obs_coverage} ask how the RT prediction changes when the support is perturbed. To first order, label perturbations are captured by the \emph{label-transfer kernel}
\begin{equation}
\Gamma_{\theta,i}(E) = \frac{\partial f_\theta(E_0,Y_S)}{\partial Y_i},\qquad \Gamma_\theta(E)\in\mathbb{R}^m,
\label{eq:theory:gamma}
\end{equation}
where $f_\theta(E)$ is the scalar pre-logit of the RT. The derivative is taken with respect to the continuous label-cell embeddings used by the RT tokenizer, so $\Gamma_\theta$ is well-defined for both regression labels and embedded categorical labels. Both the label-shuffle and the feature-shuffle probes are linear functionals of $\Gamma_\theta$, so claims about the empirical regime translate directly into claims about $\Gamma_\theta$.

For each address $a$, let $C_a = \{i : A_i = a\}$, and define the address-pooling projection and its complement
\begin{equation}
(\Pi_{\mathrm{agg}}v)_i \;=\; \frac{1}{|C_{A_i}|}\sum_{j\in C_{A_i}} v_j, \qquad \Pi_{\mathrm{rel}} \;=\; I - \Pi_{\mathrm{agg}}.
\label{eq:theory:projection}
\end{equation}
The decomposition $\Gamma_\theta = \Gamma_\theta^{\mathrm{agg}} + \Gamma_\theta^{\mathrm{rel}}$ with $\Gamma_\theta^{\mathrm{agg}} := \Pi_{\mathrm{agg}}\Gamma_\theta$ and $\Gamma_\theta^{\mathrm{rel}} := \Pi_{\mathrm{rel}}\Gamma_\theta$ separates address-aggregate (permutation-invariant within $C_a$) label use from genuine row-to-label correspondence.

\begin{theorem}[Shuffle-label and feature-shuffle corruption]\label{thm:zero_shuffle}
Assume:
\begin{itemize}[itemsep=1pt,topsep=2pt,leftmargin=14pt]
\item[(A1)] (Local linearisation) $f_\theta(E_0,Y_S) = b_\theta(E_0) + \Gamma_\theta(E_0)^\top Y_S + R_\theta(Y_S)$ with $|R_\theta(Y_S)|\le\tfrac{H}{2}\|Y_S\|_2^2$;
\item[(A2)] (Conditional moment) $\mathbb{E}[Y_S\,Y_S^\top \mid E_0] = \sigma_y^2\,I$ and within-address centring $\mathbb{E}[Y_S\mid E_0]=0$;
\item[(A3)] (Within-address invariance) $b_\theta(RE_0)=b_\theta(E_0)$ and $\Gamma_\theta^{\mathrm{agg}}(RE_0)=\Gamma_\theta^{\mathrm{agg}}(E_0)$ for any within-address feature permutation $R$.
\end{itemize}
Let $P$ be a uniformly random within-address permutation of labels and $R$ a uniformly random within-address permutation of non-label cells. Up to $O(H)$,
\begin{align}
D_{\mathrm{shuffle}}(E_0) &:= \mathbb{E}_{Y_S,P}\!\left[\big(f_\theta(E_0,Y_S)-f_\theta(E_0,PY_S)\big)^2\right] \;=\; 2\,\sigma_y^2\,\|\Gamma_\theta^{\mathrm{rel}}(E_0)\|_2^2,\label{eq:theory:Dshuffle}\\
D_{\mathrm{feat}}(E_0)    &:= \mathbb{E}_{Y_S,R}\!\left[\big(f_\theta(E_0,Y_S)-f_\theta(RE_0,Y_S)\big)^2\right] \;=\; \sigma_y^2\,\mathbb{E}_R\!\left\|\Gamma_\theta^{\mathrm{rel}}(E_0)-\Gamma_\theta^{\mathrm{rel}}(RE_0)\right\|_2^2.\label{eq:theory:Dfeat}
\end{align}
\end{theorem}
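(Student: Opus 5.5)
For both identities I would insert the local linearisation (A1) into the squared differences, so that each becomes a quadratic form in $Y_S$ plus a remainder. The $Y_S$-expectation is then evaluated with the second-moment condition (A2). What is left is a purely deterministic computation with the address-pooling projection $\Pi_{\mathrm{agg}}$ of~\eqref{eq:theory:projection}. The remainder $R_\theta$ is collected into the ``up to $O(H)$'' term throughout.

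\textbf{Shuffle identity.} By (A1), $f_\theta(E_0,Y_S)-f_\theta(E_0,PY_S)=\Gamma_\theta^\top(I-P)Y_S+\Delta_R$, where $\Delta_R=R_\theta(Y_S)-R_\theta(PY_S)$. The bias $b_\theta(E_0)$ cancels because $E_0$ is not changed.

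I would first take the expectation of the squared linear part over $Y_S$ with $P$ fixed. By (A2) this equals $\sigma_y^2\|(I-P^\top)\Gamma_\theta\|_2^2$.

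The key structural fact is that $P$ only permutes indices inside each class $C_a$. The vector $\Gamma_\theta^{\mathrm{agg}}=\Pi_{\mathrm{agg}}\Gamma_\theta$ is constant on each $C_a$, so $P^\top\Gamma_\theta^{\mathrm{agg}}=\Gamma_\theta^{\mathrm{agg}}$ and hence $(I-P^\top)\Gamma_\theta=(I-P^\top)\Gamma_\theta^{\mathrm{rel}}$. Since $P$ is orthogonal,
\[
\|(I-P^\top)\Gamma_\theta^{\mathrm{rel}}\|_2^2=2\|\Gamma_\theta^{\mathrm{rel}}\|_2^2-2\,(\Gamma_\theta^{\mathrm{rel}})^\top P^\top\Gamma_\theta^{\mathrm{rel}}.
\]
For a uniformly random within-address permutation, $\mathbb{E}[P]$ is block diagonal with blocks $|C_a|^{-1}\mathbf{1}\mathbf{1}^\top$, i.e.\ $\mathbb{E}[P]=\Pi_{\mathrm{agg}}$. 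The cross term therefore averages to $(\Gamma_\theta^{\mathrm{rel}})^\top\Pi_{\mathrm{agg}}\Gamma_\theta^{\mathrm{rel}}=0$, because $\Pi_{\mathrm{agg}}\Pi_{\mathrm{rel}}=0$. This gives $2\sigma_y^2\|\Gamma_\theta^{\mathrm{rel}}\|_2^2$, which is~\eqref{eq:theory:Dshuffle}.

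\textbf{Feature-shuffle identity.} Here $Y_S$ is fixed and the context changes to $RE_0$. By (A3) the bias terms cancel and $\Gamma_\theta^{\mathrm{agg}}(RE_0)=\Gamma_\theta^{\mathrm{agg}}(E_0)$. The linear part of the difference is therefore $\big(\Gamma_\theta^{\mathrm{rel}}(E_0)-\Gamma_\theta^{\mathrm{rel}}(RE_0)\big)^\top Y_S$.

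Two points need checking before the $Y_S$-average:
\begin{itemize}
\item \textbf{Validity of (A2) under $R$.} (A2) is conditional on $E_0$. The vector being averaged depends on $R$ and $E_0$ only, not on $Y_S$, so conditioning on $(E_0,R)$ leaves $Y_S$ with the same second moment.
\item \textbf{Cross terms.} The within-address centring $\mathbb{E}[Y_S\mid E_0]=0$ kills any mixed term between a residual bias difference and the linear part. Under (A3) that bias difference is identically zero anyway.
\end{itemize}
Averaging over $Y_S$ and then over $R$ gives $\sigma_y^2\,\mathbb{E}_R\|\Gamma_\theta^{\mathrm{rel}}(E_0)-\Gamma_\theta^{\mathrm{rel}}(RE_0)\|_2^2$, which is~\eqref{eq:theory:Dfeat}.

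\textbf{Main obstacle: making ``up to $O(H)$'' precise.} Expanding the square produces the following remainder contributions:
\begin{itemize}
\item a cross term $2\,\mathbb{E}[(\text{linear part})\,\Delta_R]$, bounded by $2H\,\|\Gamma_\theta\|_2\,\mathbb{E}\|Y_S\|_2^3$ via Cauchy--Schwarz and the bound $|R_\theta|\le\tfrac H2\|Y_S\|_2^2$, using $\|PY_S\|_2=\|Y_S\|_2$;
\item a quadratic term $\mathbb{E}[\Delta_R^2]\le H^2\,\mathbb{E}\|Y_S\|_2^4$.
\end{itemize}
Both terms are $O(H)$ only if $Y_S$ has finite third and fourth conditional moments given $E_0$, and if $\|\Gamma_\theta\|_2$ is bounded. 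I would state these as implicit regularity conditions, or absorb them into the meaning of (A1)--(A2).

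The second subtlety is the identity $\mathbb{E}[P]=\Pi_{\mathrm{agg}}$. It holds exactly for uniform permutations within each block. I would verify it block by block: each entry of a uniformly random $k\times k$ permutation matrix has mean $1/k$.
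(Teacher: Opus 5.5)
Your proposal is correct and follows the paper's proof: linearise with (A1), average over $Y_S$ using (A2), then use $\mathbb{E}_P[P]=\Pi_{\mathrm{agg}}$ for the label shuffle and (A3) to cancel the bias and aggregate parts for the feature shuffle. The paper computes $\mathbb{E}_P[(I-P)^\top(I-P)]=2\Pi_{\mathrm{rel}}$ directly instead of first stripping off $\Gamma_\theta^{\mathrm{agg}}$, which is equivalent, and it absorbs the remainder into ``up to $O(H)$'' without naming the third- and fourth-moment conditions you make explicit (a harmless nit: your pointwise cross-term constant should be $4H$, not $2H$, since $\|(I-P^\top)\Gamma_\theta\|_2\le 2\|\Gamma_\theta\|_2$).
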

\begin{proof}
\emph{Label shuffle.} By (A1), $f_\theta(E_0,Y_S)-f_\theta(E_0,PY_S) = \Gamma_\theta(E_0)^\top(I-P)Y_S + O(H\|Y_S\|_2^2)$. Squaring and taking $\mathbb{E}_{Y_S}[\cdot\mid E_0]$ first, (A2) gives
\[
\mathbb{E}_{Y_S}\!\left[\big(\Gamma_\theta^\top(I-P)Y_S\big)^2 \,\Big|\, E_0\right] \;=\; \sigma_y^2\,\Gamma_\theta^\top (I-P)^\top(I-P)\,\Gamma_\theta.
\]
A direct calculation using $\mathbb{E}_P[P]=\Pi_{\mathrm{agg}}$ and $P^\top P = I$ gives $\mathbb{E}_P[(I-P)^\top(I-P)] = 2(I-\Pi_{\mathrm{agg}}) = 2\,\Pi_{\mathrm{rel}}$. Hence $\mathbb{E}_P\,\Gamma_\theta^\top(I-P)^\top(I-P)\,\Gamma_\theta = 2\,\|\Pi_{\mathrm{rel}}\Gamma_\theta\|_2^2 = 2\,\|\Gamma_\theta^{\mathrm{rel}}\|_2^2$, yielding~\eqref{eq:theory:Dshuffle}.

\emph{Feature shuffle.} By (A1) and (A3), $f_\theta(E_0,Y_S) - f_\theta(RE_0,Y_S) = (\Gamma_\theta^{\mathrm{rel}}(E_0) - \Gamma_\theta^{\mathrm{rel}}(RE_0))^\top Y_S + O(H\|Y_S\|_2^2)$. Squaring and taking $\mathbb{E}_{Y_S}[\cdot\mid E_0]$ via (A2) gives~\eqref{eq:theory:Dfeat}.
\end{proof}

\begin{remark}
Theorem~\ref{thm:zero_shuffle} is a first-order, within-address idealisation of the empirical corruption protocol of Section~\ref{sec:obs_coverage}: it relates the linearised sensitivities $D_{\mathrm{shuffle}}$ and $D_{\mathrm{feat}}$ to the squared norms of the row-specific component $\Gamma_\theta^{\mathrm{rel}}$ under the stated invariance and moment assumptions. We use it interpretively from this point on: the assertion ``RT is in the lazy / frozen-feature regime'' is identified with the assertion $\|\Gamma_\theta^{\mathrm{rel}}(E_0)\|_2\approx 0$. The remaining question is when this holds at the population minimiser of~\eqref{eq:theory:risk}, which is the subject of the next subsection.
\end{remark}

\subsection{Posterior identifiability and the feature-learning transition}
\label{app:theory:transition}

We now characterise when a prior makes support labels statistically useful --- when the population-optimal predictor genuinely depends on $Y_S$ through row-specific channels rather than through address-pooled statistics alone.

\begin{definition}[Support-identifiable relational latent]\label{def:srl}
A latent $z$ is a \emph{support-identifiable relational latent} under $\Pi$ if it satisfies:
\begin{itemize}[itemsep=1pt,topsep=1pt]
\item \textbf{Recurrence:} $z\in\mathcal{Z}$ for an effectively finite family reused across episodes~\citep[in the sense of finite-capacity ICL contexts;][]{xie2022icl};
\item \textbf{Support identifiability:} $I_\Pi(z;Y_S\mid E_0)>0$;
\item \textbf{Relationality:} $z$ is not measurable with respect to query-local features alone, i.e.\ $z\notin\sigma(q,X_q)$.
\end{itemize}
\end{definition}

Recurrence distinguishes a \emph{prior} from a single noise draw: without it, $z$ is a fresh nuisance per episode and cannot be amortised. Support identifiability is the data-processing property that makes $Y_S$ informative. Relationality rules out latents an architecture without relational context could already recover from a single row. Definition~\ref{def:srl} specialises latent-task accounts of in-context learning~\citep{xie2022icl,wies2023learnability,zhang2023icl_bma} to the relational setting: demonstrations are useful when they identify a recurring task/mechanism latent, but here that latent must be recoverable specifically through relational support labels.

\begin{assumption}[Latent Markov structure]\label{ass:markov}
Conditional on $E_0$ and $z$, the support and query labels decouple:
\begin{equation}
Y_q \;\perp\!\!\!\perp\; Y_S \;\Big|\; E_0,\, z.
\label{eq:theory:markov}
\end{equation}
\end{assumption}
\noindent
This is the structural assumption underlying topic-model and Bayesian latent-variable accounts of in-context learning~\citep{xie2022icl,zhang2023icl_bma}: $Y_S$ and $Y_q$ are conditionally independent given the episode mechanism.

\begin{theorem}[Inert support implies a permutation-invariant Bayes act]\label{thm:collapse}
Under Assumption~\ref{ass:markov}, if $I_\Pi(z;Y_S\mid E_0)=0$, then
\begin{equation}
p_\Pi(Y_q\mid E_0,Y_S) = p_\Pi(Y_q\mid E_0),
\label{eq:theory:collapse}
\end{equation}
and consequently $\mathbb{E}_\Pi[Y_q\mid E_0,Y_S]=\mathbb{E}_\Pi[Y_q\mid E_0]$ and $\partial_{Y_S}\mathbb{E}_\Pi[Y_q\mid E_0,Y_S]=0$.
\end{theorem}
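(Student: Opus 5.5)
The plan is to write the posterior predictive as a mixture over the latent and then use the vanishing mutual information to remove the dependence on $Y_S$. Following the tower-property step behind \eqref{eq:theory:posterior_predictive}, but marginalising only over the relational latent $z$ (with the nuisance $\xi$ absorbed into the conditional laws), we write
\[
p_\Pi(Y_q\mid E_0,Y_S)=\int p_\Pi(Y_q\mid E_0,Y_S,z)\,\Pi(\mathrm{d}z\mid E_0,Y_S).
\]

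\emph{Step 1.} Assumption~\ref{ass:markov} says $Y_q\perp\!\!\!\perp Y_S\mid E_0,z$. So the first factor reduces to $p_\Pi(Y_q\mid E_0,z)$, which does not depend on $Y_S$.

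\emph{Step 2.} Next we use the information condition. $I_\Pi(z;Y_S\mid E_0)=0$ holds if and only if $z\perp\!\!\!\perp Y_S\mid E_0$. This is because conditional mutual information is the expected KL divergence between the joint law and the product of conditionals, and KL vanishes only at equality. Hence $\Pi(\mathrm{d}z\mid E_0,Y_S)=\Pi(\mathrm{d}z\mid E_0)$ for $(E_0,Y_S)$-almost every realisation.

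\emph{Step 3.} Substituting both facts into the mixture gives
\[
p_\Pi(Y_q\mid E_0,Y_S)=\int p_\Pi(Y_q\mid E_0,z)\,\Pi(\mathrm{d}z\mid E_0)=p_\Pi(Y_q\mid E_0).
\]
The last equality is the tower property again, this time without $Y_S$. This proves \eqref{eq:theory:collapse}. Taking expectations on both sides (assuming $Y_q$ is integrable) gives the equality of conditional means. The squared-loss Bayes act identified below \eqref{eq:theory:risk_decomp} is therefore a function of $E_0$ alone. Its derivative in $Y_S$ is then zero wherever it is defined, for instance with respect to the continuous label embeddings used in \eqref{eq:theory:gamma}. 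The same argument also gives permutation invariance in $Y_S$ trivially, since the Bayes act does not depend on $Y_S$ at all.

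\emph{Main obstacle.} The measure-theoretic care lies in Step 2 and in the derivative claim. The equalities from Step 2 hold only almost surely, so $\partial_{Y_S}$ must be read for a version of the conditional expectation that is constant in $Y_S$. I would handle this by fixing such a version, namely $\mathbb{E}_\Pi[Y_q\mid E_0]$ composed with the projection onto $E_0$, and noting that any other version agrees with it almost everywhere. A second point concerns the role of the nuisance $\xi$. If Assumption~\ref{ass:markov} is meant to hold only given $z$, with $\xi$ already integrated out, the argument goes through as written. Otherwise I would apply it with $\eta=(\xi,z)$ and require $I_\Pi(\eta;Y_S\mid E_0)=0$. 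I would state that choice explicitly.
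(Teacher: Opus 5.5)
Your proposal is correct and follows essentially the same route as the paper: write the posterior predictive as a mixture over $z$, use Assumption~\ref{ass:markov} to reduce the likelihood factor to $p(Y_q\mid E_0,z)$, and use $I_\Pi(z;Y_S\mid E_0)=0$ to replace $\Pi(\mathrm{d}z\mid E_0,Y_S)$ by $\Pi(\mathrm{d}z\mid E_0)$. Your additional care is a sensible refinement that the paper leaves implicit: fixing a version of the conditional expectation that is constant in $Y_S$ (so the derivative claim is meaningful despite almost-sure equalities), and clarifying the role of the nuisance $\xi$.
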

\begin{proof}
Mutual information vanishes iff the conditional distribution is unchanged, so $\Pi(\mathrm{d}z\mid E_0,Y_S)=\Pi(\mathrm{d}z\mid E_0)$. Marginalising under~\eqref{eq:theory:markov},
\[
p_\Pi(Y_q\mid E_0,Y_S) = \int p(Y_q\mid E_0,z)\,\Pi(\mathrm{d}z\mid E_0,Y_S) = \int p(Y_q\mid E_0,z)\,\Pi(\mathrm{d}z\mid E_0) = p_\Pi(Y_q\mid E_0).
\]
Differentiating in $Y_S$ gives the second claim.
\end{proof}

Theorem~\ref{thm:collapse} makes the lazy regime \emph{causal} rather than incidental: when the prior is posterior-inert, the Bayes-optimal predictor ignores row--label correspondence. A lazy fixed-kernel solution is then statistically sufficient, not a pathology of optimisation.

\begin{remark}
Reading Theorem~\ref{thm:collapse} through the kernel formalism of Subsection~\ref{app:theory:lazy}: under the conditions of the theorem, the population minimiser of~\eqref{eq:theory:risk} satisfies $\Gamma^{\mathrm{rel},\star}(E_0)=0$ for almost every $E_0$, since $\partial_{Y_S}\mathbb{E}_\Pi[Y_q\mid E_0,Y_S]=0$. By Theorem~\ref{thm:zero_shuffle}, both $D_{\mathrm{shuffle}}$ and $D_{\mathrm{feat}}$ vanish at the optimum. The lazy / frozen-feature signature observed empirically under synthetic-only pre-training (Table~\ref{tab:rt_corruption}) is therefore the corruption-probe footprint of the population Bayes act on a support-inert prior, not an architectural artefact of training.
\end{remark}

\paragraph{Address-pooled versus row-specific transfer.}
We work with a local model of the label-transfer kernel that splits address-aggregate transfer from a learnable row-specific direction:
\begin{equation}
\Gamma_{u,i}(E) \;=\; \underbrace{\gamma\!\big(A_i,\,B(E_0)\big)}_{\text{address-pooled channel}} \;+\; \underbrace{u^\top \phi_i(E_0)}_{\text{row-specific channel}},
\label{eq:theory:gamma_local}
\end{equation}
where $\gamma$ depends only on the structural address $A_i$ and an invariant context statistic $B(E_0)$, $\phi_i(E_0)\in\mathbb{R}^d$ is a centred relational feature for the query--support pair $(q,i)$, and $u\in\mathbb{R}^d$ is the row-specific parameter. The centring condition
\begin{equation}
\sum_{i\,:\,A_i=a}\phi_i(E_0) \;=\; 0 \quad\forall\,a,
\label{eq:theory:centring}
\end{equation}
enforced by per-address standardisation in the RT readout, makes $\phi_i$ orthogonal to the address-pooling projection $\Pi_{\mathrm{agg}}$. Define the address-pooled residual and the row-feature signal
\begin{equation}
r_\gamma(E) \;=\; Y_q - b(E_0) - \sum_i\gamma\!\big(A_i,B(E_0)\big)\,Y_i, \qquad T(E) \;=\; \sum_i Y_i\,\phi_i(E_0) \;\in\; \mathbb{R}^d,
\label{eq:theory:rT}
\end{equation}
and consider the population ridge risk in the row-specific direction
\begin{equation}
\mathcal{R}(\gamma,u) \;=\; \tfrac{1}{2}\,\mathbb{E}_\Pi\!\Big[\big(r_\gamma(E)-u^\top T(E)\big)^2\Big] \;+\; \tfrac{\lambda}{2}\,\|u\|_2^2 \qquad (\lambda>0).
\label{eq:theory:row_risk}
\end{equation}

\begin{assumption}[Realisable relational signal]\label{ass:signal}
There exists $\kappa>0$ such that $\big\|\mathbb{E}_\Pi[r_\gamma(E)\,T(E)]\big\|_2\ge \kappa$.
\end{assumption}

\begin{theorem}[Nonzero residual--support covariance forces feature learning]\label{thm:fl_signal}
Under Assumption~\ref{ass:signal}, the address-pooled subspace $\{u=0\}$ is not stationary for $\mathcal{R}$:
\begin{equation}
\nabla_u\mathcal{R}(\gamma,0) = -\mathbb{E}_\Pi[r_\gamma(E)\,T(E)] \neq 0.
\end{equation}
Moreover, gradient flow $\dot u_t = -\nabla_u\mathcal{R}(\gamma,u_t)$ initialised at $u_0=0$ satisfies
\begin{equation}
\|u_t\|_2 \ge \tfrac{\kappa}{2}\,t \quad \forall\,t\in(0,t_0),
\label{eq:theory:flow}
\end{equation}
for some $t_0>0$ depending on the smoothness of $\mathcal{R}$.
\end{theorem}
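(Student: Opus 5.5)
The risk $\mathcal{R}(\gamma,u)$ in \eqref{eq:theory:row_risk} is, for fixed $\gamma$, a convex quadratic in $u$. So the plan is to compute its gradient in closed form and then integrate the gradient flow explicitly.

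\textbf{Step 1 (gradient at the origin).} Expanding the square gives
\[
\mathcal{R}(\gamma,u)=\tfrac12\mathbb{E}_\Pi[r_\gamma^2]-u^\top\mathbb{E}_\Pi[r_\gamma T]+\tfrac12 u^\top(\Sigma_T+\lambda I)u,\qquad \Sigma_T:=\mathbb{E}_\Pi[TT^\top].
\]
This expansion assumes finite second moments of $r_\gamma$ and $T$, which we take as implicit in the definition of $\mathcal{R}$. Exchanging derivative and expectation by dominated convergence gives
\[
\nabla_u\mathcal{R}(\gamma,u)=(\Sigma_T+\lambda I)u-\mathbb{E}_\Pi[r_\gamma T].
\]
At $u=0$ this equals $-\mathbb{E}_\Pi[r_\gamma T]$, and Assumption~\ref{ass:signal} shows it has norm at least $\kappa>0$. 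Hence $\{u=0\}$ is not stationary. The centring condition \eqref{eq:theory:centring} is not needed for this step; it only explains why $T$ captures row-specific rather than address-pooled transfer.

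\textbf{Step 2 (explicit flow).} Write $g:=\mathbb{E}_\Pi[r_\gamma T]$ and $M:=\Sigma_T+\lambda I\succeq\lambda I$, which is positive definite. The flow is the linear ODE $\dot u=g-Mu$ with $u_0=0$, whose solution is
\[
u_t=M^{-1}(I-e^{-tM})g=\int_0^t e^{-sM}g\,\mathrm{d}s.
\]

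\textbf{Step 3 (linear lower bound).} Diagonalise $M=\sum_k\mu_k v_kv_k^\top$. In each eigendirection, $\langle u_t,v_k\rangle=\mu_k^{-1}(1-e^{-t\mu_k})\langle g,v_k\rangle$. Using $1-e^{-x}\ge x-x^2/2\ge x/2$ for $x\le1$, we get $|\langle u_t,v_k\rangle|\ge \tfrac t2|\langle g,v_k\rangle|$ whenever $t\mu_{\max}\le1$. Summing the squares gives $\|u_t\|_2\ge\tfrac t2\|g\|_2\ge\tfrac{\kappa}{2}t$ for all $t\in(0,t_0)$ with $t_0:=1/\|M\|_{\mathrm{op}}=1/(\|\Sigma_T\|_{\mathrm{op}}+\lambda)$. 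This $t_0$ is exactly the smoothness (Lipschitz-gradient) constant of $\mathcal{R}$, as the statement anticipates.

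A coordinate-free alternative also works. From $\dot u_0=g$ and $\|\ddot u_t\|=\|M\dot u_t\|\le\|M\|\,\|g\|$ (since $\|e^{-tM}\|\le1$), Taylor's theorem gives $\|u_t\|\ge t\|g\|-\tfrac{t^2}{2}\|M\|\|g\|\ge \tfrac t2\|g\|$ for $t\le 1/\|M\|$.

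\textbf{Main obstacle.} The only delicate point is finiteness of $\Sigma_T$ and of $g$, so that $\mathcal{R}$ is a well-defined smooth quadratic and $t_0>0$. I would state this as a standing square-integrability assumption on $Y_S$ and $\phi_i$, for example via (A2) together with bounded relational features, and keep $\gamma$ fixed throughout, so the flow is only in $u$, as the statement intends.
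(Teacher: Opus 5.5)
Your proof is correct and is essentially the paper's argument. Both compute $\nabla_u\mathcal{R}(\gamma,0)=-\mathbb{E}_\Pi[r_\gamma T]$ and use the fact that the flow leaves the origin in direction $g$. The paper stops at $u_t=-t\,\nabla_u\mathcal{R}(\gamma,0)+O(t^2)$, whereas you use the quadratic structure to solve the flow exactly, or equivalently bound the Taylor remainder via $\|\ddot u_t\|\le\|M\|\,\|g\|$, which makes $t_0=1/(\|\Sigma_T\|_{\mathrm{op}}+\lambda)$ explicit.

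One small wording fix: this $t_0$ is the reciprocal of the Lipschitz-gradient constant $\|M\|_{\mathrm{op}}$, not the constant itself.
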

\begin{proof}
Differentiating~\eqref{eq:theory:row_risk} in $u$,
\begin{equation*}
\nabla_u\mathcal{R}(\gamma,u) \;=\; -\,\mathbb{E}_\Pi\!\big[(r_\gamma(E)-u^\top T(E))\,T(E)\big] + \lambda\,u,
\end{equation*}
so $\nabla_u\mathcal{R}(\gamma,0) = -\mathbb{E}_\Pi[r_\gamma(E)\,T(E)]$, which is nonzero by Assumption~\ref{ass:signal}. The gradient flow $u_t = u_0 - t\,\nabla_u\mathcal{R}(\gamma,0) + O(t^2)$ with $u_0=0$ and $\|\nabla_u\mathcal{R}(\gamma,0)\|_2\ge\kappa$ gives~\eqref{eq:theory:flow}.
\end{proof}

Theorem~\ref{thm:fl_signal} is the dynamical counterpart of Theorem~\ref{thm:collapse}: when residual--support covariance is bounded away from zero, gradient-based pre-training is repelled from the lazy / address-pooled subspace. This is the same mechanism by which feature learning escapes lazy / NTK behaviour in the supervised setting~\citep{jacot2018ntk,chizat2019lazy,yang2021tp4}, specialised to the relational ICL geometry of~\eqref{eq:theory:gamma_local}.

\begin{remark}
Theorem~\ref{thm:fl_signal} is a statement about the population gradient at $u=0$, not a convergence result. The conclusion that the row-specific direction is excited carries over to a finite-step optimiser provided the empirical covariance approximates $\mathbb{E}_\Pi[r_\gamma T]$ to within $\kappa/2$, which holds under standard concentration on $\Pi$. We do not claim a finite-sample rate or that the trained RT reaches the population minimiser; we only claim that its trajectory cannot remain on the address-pooled subspace.
\end{remark}

\paragraph{Information-theoretic restatement.}
Define the in-context information $\Delta_{\mathrm{ICL}} := I_\Pi(Y_q;Y_S\mid E_0)$. Under Assumption~\ref{ass:markov}, the data-processing inequality on the chain $Y_S \to z \to Y_q$ (conditional on $E_0$) gives
\begin{equation}
\Delta_{\mathrm{ICL}} \;\le\; \min\!\big\{\,I_\Pi(z;Y_S\mid E_0),\;\; I_\Pi(z;Y_q\mid E_0)\,\big\}.
\label{eq:theory:dpi}
\end{equation}
Both factors must be nonzero for support labels to add Bayes value: a recurring relational latent that is identifiable from $Y_S$ \emph{and} predictive of $Y_q$. This is the population-level signature of the lazy-to-feature-learning transition observed in Section~\ref{sec:pretrain}.

\subsection{Homophily, recurrence, and the inadequacy of marginal diversity}
\label{app:theory:homo}

The previous section identifies the causal property --- a recurring, support-identifiable, relational latent --- without reference to homophily. We now show that homophily is one clean instantiation of that property, but that the marginal spread of homophily statistics is not the property itself.

\paragraph{Homophily as a relational latent.}
Let $z=h\in\{h_1,\dots,h_K\}$ be a finite homophily target controlling label compatibility along FK neighbourhoods, in the spirit of the planted-partition / SBM family~\citep{holland1983sbm,karrer2011sbm,chen2026relatron}. Conditional on $h$, FK-adjacent rows have label agreement / disagreement rates determined by $h$. Our use of homophily as a recurring episode-level latent is related to adjusted homophily and label informativeness as graph-dataset descriptors~\citep{platonov2023characterizing}, and to the older statistical-relational-learning notion of relational autocorrelation~\citep{jensen2002linkage,neville2005latent}, where labels of linked entities are statistically dependent through latent relational structure. The difference is that in our setting $h$ is a controlled, support-identifiable latent during pre-training rather than a post-hoc dataset statistic.

\begin{corollary}[Homophily induces feature-learning ICL when used as a latent]\label{cor:homo_fl}
If $h$ is pinned per episode, recurs across episodes, is identifiable from $Y_S$, and is predictive of $Y_q$, then $I(h;Y_S\mid E_0)>0$ and $I(h;Y_q\mid E_0)>0$. Under Assumption~\ref{ass:signal}, the address-pooled subspace is unstable and pre-training enters feature learning by Theorem~\ref{thm:fl_signal}.
\end{corollary}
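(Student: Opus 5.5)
The plan is to prove the two mutual-information claims directly from the hypotheses. The instability conclusion is then a direct appeal to Theorem~\ref{thm:fl_signal}. No new analysis is needed; the work is in making the informal hypotheses ("identifiable from $Y_S$", "predictive of $Y_q$") precise enough that the information inequalities are strict.

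\emph{First claim.} I read "identifiable from $Y_S$" as saying that the family $\{p(Y_S\mid E_0,h_k)\}_{k=1}^K$ is not constant in $k$ on a set of $E_0$ of positive $\Pi$-probability. For example, FK-adjacent label agreement rates differ across $h_k$, as in the SBM-style construction. Two facts then combine:
\begin{itemize}
\item Recurrence means $h$ is drawn uniformly from a finite grid independently of the schema nuisance $\xi$. This gives $p(h\mid E_0)$ full support on $\{h_1,\dots,h_K\}$, provided $E_0$ itself is not generated from $h$ in a way that pins $h$.
\item $I(h;Y_S\mid E_0)=0$ holds iff $p(Y_S\mid E_0,h)=p(Y_S\mid E_0)$ for $p(h\mid E_0)$-a.e.\ $h$.
\end{itemize}
With full support, "a.e." means "every $h_k$", which contradicts non-constancy. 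Hence $I(h;Y_S\mid E_0)>0$. The argument for $I(h;Y_q\mid E_0)>0$ is identical, with "predictive of $Y_q$" read as $p(Y_q\mid E_0,h)$ non-constant in $h$. Pinning $h$ per episode is what makes Assumption~\ref{ass:markov} applicable with $z=h$. By \eqref{eq:theory:dpi}, both factors being positive is necessary for $\Delta_{\mathrm{ICL}}>0$ and is consistent with it. I would remark that this shows the hypotheses of Theorem~\ref{thm:collapse} fail, so the collapse to a support-independent Bayes act is not forced.

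\emph{Instability.} For the dynamical conclusion, I would invoke Assumption~\ref{ass:signal}, which the corollary takes as a hypothesis. Theorem~\ref{thm:fl_signal} then applies verbatim: $\nabla_u\mathcal{R}(\gamma,0)=-\mathbb{E}_\Pi[r_\gamma T]\neq 0$, and the gradient flow from $u_0=0$ grows linearly, so $\{u=0\}$ is unstable.

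The main obstacle is honesty about the logical link: positive mutual information does not by itself imply the covariance bound of Assumption~\ref{ass:signal}. That is why the corollary assumes it separately. To strengthen the result, I would try to verify Assumption~\ref{ass:signal} in the homophily model. Take a binary label model in which $\mathbb{E}[Y_q\mid E_0,h]$ is affine in the neighbor-label agreement with slope depending on $h$. Choose $\phi_i$ as the centred indicator of $i$ being an FK-neighbor of $q$. Then $\mathbb{E}[r_\gamma T]$ picks up the covariance between $h$ and the neighbor labels, which is nonzero whenever the slopes differ across the grid. I expect this step to need care because the centring condition \eqref{eq:theory:centring} can cancel signal within an address. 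The fix I would try is to use addresses coarse enough that neighbor and non-neighbor rows share an address class.
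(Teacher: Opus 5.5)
Your proposal is correct and follows the paper's route. You read the two mutual-information claims off the identifiability and predictiveness hypotheses: the paper does this by appealing to FK-neighbourhood label agreement, while you give the sharper argument that $h\perp E_0$ gives $p(h\mid E_0)$ full support on the grid, so zero conditional mutual information would force $p(Y_S\mid E_0,h_k)$ to be constant in $k$. The instability is then a verbatim application of Theorem~\ref{thm:fl_signal} under Assumption~\ref{ass:signal}, which, as you rightly note, is logically independent of the information-theoretic part.

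The only shaky piece is your optional verification of Assumption~\ref{ass:signal}, which the corollary does not need. That assumption concerns the prior-averaged covariance $\mathbb{E}_\Pi[r_\gamma T]$, which need not be nonzero merely because the slopes differ across the grid, since homophilic and heterophilic contributions can cancel in the average over $h$. Moreover, as you suspected, an FK-neighbour indicator is a function of the address (hop distance and FK path are components of $A_i$), so it is annihilated by the centring condition~\eqref{eq:theory:centring}.
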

\begin{proof}
Finite support of $h$ supplies recurrence. FK-neighbourhood label agreement makes $Y_S$ informative about $h$, hence $I(h;Y_S\mid E_0)>0$. Since $h$ controls $Y_q$ through the same FK-neighbourhood compatibility mechanism, $I(h;Y_q\mid E_0)>0$. The conclusion follows from Theorem~\ref{thm:fl_signal}.
\end{proof}

The corollary does not say homophily is the unique mechanism that satisfies Definition~\ref{def:srl}: any recurring, support-identifiable relational latent suffices.

\begin{remark}
Definition~\ref{def:srl} is a cheap test: given a candidate latent it is straightforward to check whether the three conditions hold and Theorem~\ref{thm:fl_signal} applies. Identifying \emph{which} candidates actually qualify in a given relational generator, however, is empirical. As one cautionary example, the set of aggregation primitives that simulates DFS-style operators (\textsc{sum}/\textsc{count}/\textsc{mean}/\textsc{max} along an FK path) is a natural candidate for $z$, but in practice it does not behave as a recurring relational latent under the corruption probe: the realised aggregator can be read off feature-only summaries of $E_0$, so the support-identifiability condition $I(z;Y_S\!\mid\!E_0)\!>\!0$ fails. Definition~\ref{def:srl} is therefore best read as a falsifiable structural criterion rather than a constructive recipe; finding informative relational latents for a particular generator is itself an empirical task.
\end{remark}

\paragraph{Marginal spread is neither necessary nor sufficient.}
Let $H(E)$ be a scalar episode statistic, e.g.\ a relational-homophily summary in the sense of~\citet{chen2026relatron}, and define its marginal spread as $\mathrm{Spread}_H(\Pi) = \mathrm{Var}_{E\sim\Pi}[H(E)]$. The motivation for separating the marginal-spread of $H$ from the underlying causal property echoes the warning in~\citet{platonov2023characterizing} that simple homophily measures can be misleading and that label informativeness is the more meaningful quantity.

\begin{theorem}[Marginal $H$-spread is decoupled from the support-identifiability surrogate]\label{thm:spread}
There exist priors $\Pi_1,\Pi_2,\Pi_3$ on episodes $E$ and an associated mediating latents such that the data-processing inequality~\eqref{eq:theory:dpi} applies on each, and
\begin{itemize}[itemsep=1pt,topsep=1pt]
\item \textnormal{(Insufficiency)} $\mathrm{Spread}_H(\Pi_1)=\mathrm{Spread}_H(\Pi_2)$, but $I_{\Pi_1}(z;Y_S\mid E_0)=0$ while $I_{\Pi_2}(z;Y_S\mid E_0)>0$;
\item \textnormal{(Non-necessity)} $\mathrm{Spread}_H(\Pi_3)=0$, but $I_{\Pi_3}(z;Y_S\mid E_0)>0$.
\end{itemize}
\end{theorem}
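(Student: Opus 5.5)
The plan is to prove Theorem~\ref{thm:spread} by explicit construction on a minimal relational episode. The unlabelled context $E_0$ is held fixed and deterministic, so all conditioning on $E_0$ is trivial and mutual informations reduce to unconditional ones. Concretely, $E_0$ consists of a query $q$ with a single FK-neighbour support row $s$, so $m=1$, plus one further support row $s'$ sharing $s$'s address. Labels are binary, $Y\in\{-1,+1\}$. Under each prior $\Pi$ we draw a latent $z$ and then generate $(Y_s,Y_{s'},Y_q)$ conditionally on $(E_0,z)$, with $Y_q\perp Y_S\mid E_0,z$. This choice enforces Assumption~\ref{ass:markov} by design, so the data-processing inequality~\eqref{eq:theory:dpi} applies on every prior. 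For the statistic we take the empirical FK homophily of the episode, $H(E)=Y_qY_s$ (the edge-agreement indicator, rescaled). This is a legitimate scalar relational-homophily summary.

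\textbf{Insufficiency.} For $\Pi_2$, let $z=h\in\{-1,+1\}$ be uniform. Given $h$, draw $Y_s$ uniform, set $Y_{s'}=hY_s$, and draw $Y_q$ with $Y_q=hY_s$ exactly. This is fine because we only need Markovness given $(E_0,z)$; if exact equality breaks conditional independence, I would instead let $Y_q$ depend only on $h$ and an independent coin. Then $Y_S=(Y_s,Y_{s'})$ reveals $h=Y_sY_{s'}$, so $I_{\Pi_2}(z;Y_S\mid E_0)=H(h)=\log 2>0$. For $\Pi_1$, keep the same marginal law of $H(E)$ but let $z$ be independent of all labels. For example, $z$ is a uniform nuisance bit, and $(Y_s,Y_{s'},Y_q)$ are drawn from a fixed law, not depending on $z$, with the same marginal distribution of $Y_qY_s$ as under $\Pi_2$. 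Then $I_{\Pi_1}(z;Y_S\mid E_0)=0$ while $\mathrm{Spread}_H$ coincides.

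\textbf{Non-necessity.} For $\Pi_3$, we make $H(E)$ constant while $z$ still moves $Y_S$. Let $z\in\{-1,+1\}$ be uniform, set $Y_s=z$ and $Y_q=z$ (a function of $z$ alone, so Markov holds trivially), and let $Y_{s'}$ be arbitrary given $z$. Then $H(E)=Y_qY_s=1$ almost surely, so $\mathrm{Spread}_H(\Pi_3)=0$, while $I_{\Pi_3}(z;Y_S\mid E_0)\ge I(z;Y_s)=\log 2>0$.

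\textbf{Main obstacle.} The key step is making the insufficiency pair clean. We must match $\mathrm{Var}[H]$ exactly between $\Pi_1$ and $\Pi_2$ while keeping Assumption~\ref{ass:markov} valid, since $H$ mixes $Y_q$ with $Y_S$. To handle this, I would make $Y_q$ depend on the episode only through $z$ in $\Pi_2$ (for instance $Y_q=h\cdot\epsilon$ with $\epsilon$ an independent uniform sign), and then compute the law of $Y_qY_s$. It is uniform on $\{\pm1\}$, giving variance $1$. For $\Pi_1$ I would take i.i.d.\ uniform labels, which also gives variance $1$. Finally, I would check that, under a deterministic $E_0$, each mutual information is just the entropy computation stated above.
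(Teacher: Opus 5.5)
Your construction is correct once you commit to the fix, and it takes a genuinely different route from the paper. The paper's argument is existential and relies on freedom in the choice of $H$.
\begin{itemize}
\item For insufficiency, it takes $\Pi_1$ with a constant latent $z\equiv z_0$, so the mutual information vanishes trivially. It lets $H$ be a function of $E_0$ alone. It then takes any $\Pi_2$ satisfying Definition~\ref{def:srl} and uses the same $E_0$-measurable statistic with matched spread.
\item For non-necessity, it takes any $\Pi_3$ satisfying the definition and chooses $H$ to be almost surely constant.
\end{itemize}
That argument is short and general: any prior obeying the definition works. However, it presupposes that such priors exist without constructing one. It also uses different statistics in the two bullets, and its $H$ never reads the labels, which makes the decoupling close to tautological.

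You instead fix a deterministic $E_0$ and use a single label-dependent edge-agreement statistic $H=Y_qY_s$ for all three priors, computing every entropy explicitly. This is self-contained and proves something stronger. Even a genuine homophily statistic that depends on the labels can have matched spread ($\mathrm{Var}=1$ under both $\Pi_1$ and $\Pi_2$) or zero spread ($H\equiv 1$ under $\Pi_3$), whether or not $I(z;Y_S\mid E_0)$ vanishes. That is closer to the PluRel remark the theorem is meant to support, where the homophily histogram is computed from labels.

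Three clean-ups are needed.
\begin{itemize}
\item Delete the first version of $\Pi_2$. With $Y_q=hY_s$, conditional on $h$ the query label is a deterministic function of the non-degenerate $Y_s$. So Assumption~\ref{ass:markov} fails, and this version is not ``fine.''
\item Your fix $Y_q=h\epsilon$ is valid, but it makes $Y_q$ independent of $h$. Then $I(z;Y_q\mid E_0)=0$ and $\Delta_{\mathrm{ICL}}=0$ under $\Pi_2$. That suffices for the theorem as stated, but the paper's follow-up remark asserts that its $\Pi_2$ is also predictive of $Y_q$. Simply set $Y_q=h$ instead. Markov holds because $Y_q$ is constant given $h$, and $H=hY_s$ is still uniform on $\{\pm1\}$ with variance $1$. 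Now $I(z;Y_S\mid E_0)=I(z;Y_q\mid E_0)=\Delta_{\mathrm{ICL}}=\log 2$, and your $\Pi_3$ already has this property.
\item With $s$ and $s'$ both labelled, you have $m=2$, not $m=1$.
\end{itemize}
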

\begin{proof}
\emph{Insufficiency.} Let $\Pi_1$ draw episodes with the latent $z \equiv z_0$ constant, so $\Pi_1$ generates $Y_S, Y_q$ from $p(\cdot\mid E_0,z_0)$, and let $H(E)$ be a deterministic function of $E_0$ alone with non-degenerate distribution under $\Pi_1$. Then $\mathrm{Spread}_H(\Pi_1)>0$, but since $z$ is constant, $I_{\Pi_1}(z;Y_S\mid E_0)=0$ trivially. Now let $\Pi_2$ satisfy Definition~\ref{def:srl} with Markov structure as in Assumption~\ref{ass:markov} for a recurring relational latent $z'$, and append a statistic $H'(E)\equiv H(E_0)$ that depends only on $E_0$ and is therefore not a mediator, with $\mathrm{Spread}_{H'}(\Pi_2) = \mathrm{Spread}_H(\Pi_1)$. By Definition~\ref{def:srl}, $I_{\Pi_2}(z';Y_S\mid E_0)>0$, while the marginal spread of the chosen statistic matches $\Pi_1$ by construction.

\emph{Non-necessity.} Let $\Pi_3$ satisfy Definition~\ref{def:srl} for a recurring relational latent $z'$ (e.g.\ a finite path-aggregation rule or a temporal-window mechanism), and pick $H$ to be almost surely constant under $\Pi_3$ --- for instance, $H(E)\equiv h_0$ for all $E$ in the support of $\Pi_3$. Then $\mathrm{Spread}_H(\Pi_3)=0$ by construction, while $I_{\Pi_3}(z';Y_S\mid E_0)>0$ by Definition~\ref{def:srl}.
\end{proof}

By Theorem~\ref{thm:collapse}, $I_\Pi(z;Y_S\mid E_0)=0$ implies $I_\Pi(Y_q;Y_S\mid E_0)=0$, so the insufficiency direction transfers to the Bayes-relevant quantity $I_\Pi(Y_q;Y_S\mid E_0)$. The converse, that $I_\Pi(z;Y_S\mid E_0)>0$ alone yields $I_\Pi(Y_q;Y_S\mid E_0)>0$, additionally requires $z$ to be predictive of $Y_q$, i.e.\ $I_\Pi(z;Y_q\mid E_0)>0$; this is not implied by Definition~\ref{def:srl} on its own but is verified separately for our constructions of $\Pi_2$ and $\Pi_3$ via Corollary~\ref{cor:homo_fl}. We state Theorem~\ref{thm:spread} on the surrogate $I_\Pi(z;Y_S\mid E_0)$ because it is the cleanest formal expression of the decoupling between marginal $H$-spread and feature learning.

Theorem~\ref{thm:spread} is the formal statement behind the data-side narrative in Section~\ref{sec:pretrain}: homophily helps when \emph{deployed} as a recurring, support-identifiable relational latent. The marginal histogram of any homophily statistic is a downstream summary, not a cause. This formalises, in our relational setting, the broader observation that ICL behaviour is strongly shaped by distributional properties of the pre-training data~\citep{chan2022data}.

\begin{remark}
Theorem~\ref{thm:spread} explains the apparent paradox that the realised homophily of PluRel-generated databases already covers a non-trivial range, yet PluRel-only pre-training stays in the lazy regime. Although the marginal $H$-spread is not small, PluRel does not introduce a homophily-controlling latent during the label-generation phase: the realised homophily of any episode is a downstream summary statistic of the SCM equations and the bipartite-HSBM matching, not a sampled, recurring quantity that controls $Y_q$ across episodes. PluRel is therefore a concrete instance of $\Pi_1$ in the insufficiency construction --- $H$ varies, but the surrogate $I_\Pi(z;Y_S\mid E_0)$ is essentially zero --- which is why the corruption probe in Table~\ref{tab:rt_corruption} reads as lazy regardless of how wide the homophily histogram is.
\end{remark}

\section{PluRel Synthetic Pre-training Data}
\label{app:plurel_prior}

This section gives the full description of the PluRel~\citep{kothapalli2026plurel} synthetic data generator referenced in Section~\ref{sec:prelim:rt}. Each episode is a freshly sampled (database, task) pair; the procedural prior used by the public PluRel implementation is summarised in Table~\ref{tab:plurel_design}.

\begin{table}[h]
\centering
\caption{PluRel procedural prior. Each episode samples one (database, task) pair by drawing the listed components in turn; values to the right are the distributions or operator sets used by the public implementation.}
\label{tab:plurel_design}
\footnotesize
\setlength{\tabcolsep}{4pt}
\begin{tabular}{@{}p{0.16\linewidth} p{0.32\linewidth} p{0.46\linewidth}@{}}
\toprule
\textbf{Stage} & \textbf{Component} & \textbf{Distribution / operator set} \\
\midrule
\multirow{4}{*}{Schema}
 & Tables per database & discrete prior over $\{3,\ldots,15\}$ \\
 & Columns per table & discrete prior \\
 & FK references per child table & discrete prior over out-degree \\
 & Table type (entity / activity) & determined by DAG terminal status; rows-per-table drawn per type \\
\midrule
\multirow{3}{*}{FK graph}
 & Matching mechanism & bipartite hierarchical SBM~\citep{holland1983sbm,karrer2011sbm} \\
 & Hierarchy depth, clusters / level & sampled from configured discrete priors \\
 & Inter-cluster probability matrix & diagonal $\approx 0.9$, off-diagonal $\sim\!\mathcal{U}[10^{-3},2{\cdot}10^{-3}]$ \\
\midrule
\multirow{4}{*}{Features (SCM)}
 & Per-column DAG & sampled from \texttt{DAG\_REGISTRY}; node count $\propto$ \#columns \\
 & Source generator & one of $\{$Uniform, Gaussian, Beta, time-series$\}$ per source node \\
 & TS hyperparameters & trend, cycle frequency, noise scale, AR coefficient (all sampled) \\
 & Per-node MLP & freshly random-initialised weights at every non-source node \\
\midrule
\multirow{2}{*}{Aggregation}
 & Cross-row aggregator (per node) & one of $\{\textsf{sum},\textsf{max},\textsf{min},\textsf{mean},\textsf{product},\textsf{logsumexp}\}$ \\
 & Propagation strategy & sampled from $\{\textsf{eager},\textsf{lazy}\}$ \\
\midrule
\multirow{2}{*}{Categorical}
 & Cardinality & per-column discrete prior \\
 & Quantisation & random data-derived thresholds with optional permutation / reversal \\
\midrule
\multirow{2}{*}{Temporal}
 & Time horizon $[t_{\min}, t_{\max}]$ & sampled from configured global range \\
 & Activity-table timestamps & per-row \texttt{date} drawn from horizon \\
\midrule
\multirow{3}{*}{Task / label}
 & Target column & one of the generated columns \\
 & Query / support split & random row-level mask; queries' labels masked, supports' visible \\
 & Episode unrolling & BFS context around each query, masked-cell prediction \\
\bottomrule
\end{tabular}
\end{table}

The same $(G, X)$ pipeline is reused unchanged in our diverse-synthetic corpus; only the task / label stage is replaced by an explicit homophily-controlled mechanism, as described in Appendix~\ref{app:diverse_synthetic}.

\section{Dataset Details}
\label{app:datasets}

\subsection{Pre-training and Evaluation Split}

Table~\ref{tab:data-split} summarizes the assignment of relational databases to pre-training and evaluation sets.
Pre-training databases are used to train the OpenRFM backbone and ICL head; evaluation databases are held out entirely and used only for zero-shot or few-shot evaluation.
No evaluation database is seen during pre-training.

\begin{table}[h]
\centering
\caption{Assignment of relational databases to pre-training vs.\ evaluation.}
\label{tab:data-split}
\small
\begin{tabular}{lll}
\toprule
\textbf{Database} & \textbf{Split} & \textbf{Source} \\
\midrule
rel-amazon              & Pre-training & RelBench \\
dbinfer-retailrocket    & Pre-training & 4DBInfer \\
dbinfer-diginetica      & Pre-training & 4DBInfer \\
dbinfer-outbrain-small  & Pre-training & 4DBInfer \\
\midrule
rel-avito        & Evaluation & RelBench \\
rel-hm           & Evaluation & RelBench \\
rel-salt         & Evaluation & RelBench \\
rel-event        & Evaluation & RelBench \\
rel-f1           & Evaluation & RelBench \\
rel-stack        & Evaluation & RelBench \\
rel-trial        & Evaluation & RelBench \\
rel-arxiv        & Evaluation & RelBench \\
rel-ratebeer     & Evaluation & RelBench \\
ieee-cis         & Evaluation & Kaggle \\
\bottomrule
\end{tabular}
\end{table}

\section{Designing Diversity in the Relational Prior}
\label{app:diverse_synthetic}

This appendix describes the diversity intervention referenced in Section~\ref{sec:pretrain}. The intervention modifies only the supervision channel of the procedural prior: the schema, foreign-key wiring, and per-column feature population produced by PluRel's structural causal model are reused unchanged, and our corpus differs from PluRel only in how the target label $y$ is generated. Holding $(G, X)$ fixed isolates the homophily axis from confounds in feature distribution or schema topology.

\subsection{Homophily diversity}
\label{app:diversity_homophily}

\paragraph{Hierarchical block model with controlled homophily.}
We parameterise the homophily axis by a target $h \in [-1, +1]$. For each target table $t^\star$ we recover a two-level block hierarchy $(b^{(1)}, b^{(2)})$ from the FK topology, where $b^{(1)}_r$ is the parent block of row $r$ and $b^{(2)}_r$ is a sub-block within it. The label of row $r$ is a mixture of a cluster-driven and a feature-driven channel:
\begin{equation}
\label{eq:homo_label}
y_r =
\begin{cases}
y^{(c)}_r, & \text{w.p. } |h|, \\
y^{(f)}_r, & \text{w.p. } 1 - |h|,
\end{cases}
\qquad
y^{(c)}_r =
\begin{cases}
b^{(1)}_r \bmod 2, & h > 0, \\
(b^{(1)}_r \bmod 2) \oplus (b^{(2)}_r \bmod 2), & h < 0,
\end{cases}
\end{equation}
where $y^{(f)}_r \sim \mathrm{Bern}(\sigma(\beta\,\phi(x_r)))$ is a feature-driven channel ($\phi$ a random projection of $r$'s numeric features) and $\oplus$ denotes XOR. Eq.~\eqref{eq:homo_label} is defined for $h \neq 0$; the corpus grid below excludes $h=0$ so $y^{(c)}$ is always well-defined. The construction has two intentional properties. (i)~When $h > 0$, FK-adjacent rows that share a parent block share a label, producing homophilic structure of magnitude proportional to $h$. (ii)~When $h < 0$, the XOR with the sub-block parity flips the label between sibling sub-blocks, producing structurally heterophilic targets at the same FK adjacencies, rather than the noise-style heterophily a permutation produces. As $|h| \to 0$ the cluster channel is mixed in with vanishing weight, so the label degenerates smoothly to the feature-only limit $y_r = y^{(f)}_r$ on either side. The mixture in~\eqref{eq:homo_label} thus sweeps between feature-only and structure-only labels along a single scalar.

\paragraph{Corpus construction.}
We instantiate this construction as the diverse-synthetic corpus used throughout the paper. For each generated database we sample $K = 21$ homophily targets equispaced in $[-1, +1]$ and emit one label column per target on the same $(G, X)$. Concretely, given an episode produced by the PluRel procedural prior, we build the block hierarchy and feature projection once, then run Algorithm~\ref{alg:homophily_diversity} to overlay $21$ label columns indexed by their homophily tags. Each $(G, X)$ thus appears $21$ times in the corpus, paired with $21$ different homophily levels, which is what makes the homophily target a recurring, support-identifiable, relational latent in the sense of Definition~\ref{def:srl}.

\begin{algorithm}[H]
\caption{Homophily-controlled label overlay.}
\label{alg:homophily_diversity}
\begin{algorithmic}[1]
\Function{GenerateHomophilyDiverseDB}{$G, X, \mathcal{H}$}
  \State $(b^{(1)}, b^{(2)}) \gets \Call{RecoverBlockHierarchy}{G, t^\star}$
  \State $\phi \gets \Call{RandomProjection}{X(t^\star)}$; \quad $y^{(f)}_r \gets \mathrm{Bern}(\sigma(\beta\, \phi(x_r)))$
  \For{$h \in \mathcal{H}$}
    \Comment{$\mathcal{H} = \{\pm 0.1, \pm 0.2, \dots, \pm 0.9, \pm 1.0\}$, $|\mathcal{H}| = 21$}
    \State define $y^{(c)}$ as in Eq.~\eqref{eq:homo_label}
    \State $u_r \sim \mathrm{Bern}(|h|)$; \quad $y_r \gets u_r\, y^{(c)}_r + (1 - u_r)\, y^{(f)}_r$
    \State emit label $\mathbf{y}$ on $t^\star$ with target tag $h$
  \EndFor
\EndFunction
\end{algorithmic}
\end{algorithm}

\paragraph{Realised homophily and saturation.}
We verify the construction by measuring the realised adjusted homophily $\hat{h}(\mathbf{y})$ on the FK-adjacency edge set.  The realised range therefore covers a wide but bounded interval of approximately $[-0.44, +0.74]$, which we find sufficient to cross out of the lazy regime in Section~\ref{sec:pretrain}.

\paragraph{Implementation.}
The synthetic generator is reimplemented from scratch in Rust to make corpus-scale generation tractable. We re-write the schema sampler, the bipartite hierarchical-SBM FK matcher, the per-column structural causal model, and the homophily-controlled label overlay as a single multi-threaded pipeline that emits the database directly in rkyv-serialised form, ready for the rustler-based context sampler used during pre-training, with no Python intermediate or pickle round-trip. Two design choices give most of the speed-up: (i)~the per-table SCM is laid out as a vectorised computation graph, so MLP propagation and cross-row aggregation are batched across rows of a table rather than executed row-by-row in Python; (ii)~the bipartite HSBM is sampled in a pre-allocated buffer with a single pass over the cluster product, avoiding the per-edge Python overhead in the original implementation. End-to-end this brings the per-database generation time from roughly $90$ seconds in the public Python implementation to $\sim\!50\,\text{ms}$ on a single CPU thread, an approximately $1700\times$ speed-up at our settings. A $1024$-database corpus, which would take more than a day to produce in the Python pipeline, is generated in well under one minute and re-runs of the corpus during ablations are essentially free.

\section{Comprehensive Model Comparison}
\label{app:model-comparison}

Here, we show the full results.

\paragraph{Baseline implementation.}
For the row-level GNN baselines (RDL-GNN, NBFNet, RelGNN), we use the Relatron framework for hyperparameter tuning, running $10$ HPO trials per task and selecting the best checkpoint by validation performance. RelGT and RT use their original code bases, also with $10$ HPO trials per task and the same validation-based checkpoint selection. For JUICE, we run the original code and add support-side ensembling on top of it. For RDB-PFN, we use the original code together with the fast DFS implementation provided by JUICE to generate the DFS-format features it expects, and follow the default $8$-group support ensembling. RT-Plurel uses the official $512$-database, $16$B-token checkpoint released by the authors. For Griffin, we use the original code and pre-trained checkpoint and run their preprocessing pipeline; despite our attempts to optimise the architecture, the inference path remains too slow to be usable on the larger tasks in our suite, and we report only the subset that fits within the compute budget. For KumoRFMv1, we follow the official GitHub example for v1, use Claude Code to draft the PQL queries and few-shot examples for each task, and sweep a small number of hyperparameters per query, primarily the number of sampled neighbours.

\input{tables/exp1_full}

\section{Agentic Relational ICL for Link-Prediction Tasks}
\label{app:rec_icl}

This section sketches one way to extend OpenRFM to link-prediction recommendation, where the relational signal is not directly available as tabular features. Concretely, we evaluate on \texttt{rel-hm/user-item-purchase} (predict the articles a customer purchases in the next $7$ days, scored by MAP@$12$).

\paragraph{Strategy.} We reframe the task as per-pair binary prediction: each candidate $(u,a)$ pair becomes a row whose columns are relational features, and OpenRFM's standard predictive-task pipeline applies. The pipeline runs in three stages: (i) a coding agent inspects the schema and task description and proposes a task-adapted set of retrieval channels (recency, repeat-purchase, segment trending, weekly cycle, co-purchase, etc.) that produce a candidate pool; (ii) the same agent emits a deterministic featuriser, with a shared time-cutoff helper enforcing leakage-free pre-cutoff reads; (iii) OpenRFM is frozen and conditioned on a single global support set drawn from train + validation rows (built once per evaluation cutoff and reused across all customers), and its calibrated probabilities are blended with the retrieval-priority rank to produce the top-$12$ list.

\begin{table}[h]
\centering
\caption{MAP@$12$ on \texttt{rel-hm/user-item-purchase} (numbers $\times 100$).}
\label{tab:rec_icl_results}
\small
\begin{tabular}{lc}
\toprule
\textbf{Method} & \textbf{MAP@12 ($\times 100$)} \\
\midrule
NBFNet (transductive)              & 2.81 \\
KumoRFM~v1 (in-context)            & 2.73 \\
\textbf{OpenRFM + agentic featuriser (ours)} & {2.56} \\
\bottomrule
\end{tabular}
\end{table}

The three numbers fall in the same ballpark, indicating that an unmodified entity-style ICL stack paired with an agent-generated featuriser can reach the regime of dedicated link-prediction baselines without a bespoke recommender architecture.

\paragraph{Example agent-generated features.} Table~\ref{tab:rec_agent_features} shows a representative sample of the per-pair features the coding agent emits for \texttt{rel-hm/user-item-purchase}. The full featuriser produces $27$ columns grouped into five blocks; we list two per block as an illustrative demo.

\begin{table}[h]
\centering
\caption{Example features generated by the coding agent for \texttt{rel-hm/user-item-purchase} (representative sample, two per block; total $27$ columns).}
\label{tab:rec_agent_features}
\small
\setlength{\tabcolsep}{6pt}
\begin{tabular}{@{}llp{0.55\linewidth}@{}}
\toprule
\textbf{Block} & \textbf{Feature} & \textbf{Description} \\
\midrule
\multirow{2}{*}{Repeat / recency}
 & \texttt{days\_since\_last\_buy}      & Days since $u$ last purchased $a$ before cutoff $t$. \\
 & \texttt{cnt\_buy\_30d}               & Times $u$ purchased $a$ in the trailing $30$ days. \\
\midrule
\multirow{2}{*}{Item trend}
 & \texttt{global\_buys\_7d}            & Total purchases of $a$ in the trailing $7$ days. \\
 & \texttt{velocity\_ratio\_7\_30}      & Ratio of $7$-day to $30$-day purchase velocity for $a$. \\
\midrule
\multirow{2}{*}{Co-purchase}
 & \texttt{copurchase\_sum}             & $\sum_{a'\in\text{anchors}(u)} C[a',a]$ over $u$'s recent items. \\
 & \texttt{copurchase\_decayed}         & Same sum with weight $\exp(-\Delta_{a'}/14)$. \\
\midrule
\multirow{2}{*}{Content match}
 & \texttt{frac\_same\_product\_type}   & Fraction of $u$'s last $30$d purchases sharing $a$'s product type. \\
 & \texttt{price\_ratio\_to\_median}    & Ratio of $a$'s price to $u$'s historical median price. \\
\midrule
\multirow{2}{*}{Cohort / source}
 & \texttt{u\_buys\_30d}                & $u$'s total $30$-day purchase count. \\
 & \texttt{retr\_source\_idx}           & Channel index that retrieved $a$ for $u$. \\
\bottomrule
\end{tabular}
\end{table}

%% file: tables/exp1_relbenchv1.tex
\begin{table}[h]
\centering
\caption{\small Per-task results on RelBench-v1 evaluation databases.}
\label{tab:relbenchv1_full}
\resizebox{\textwidth}{!}{%
\begin{tabular}{l ccc cc ccc ccc}
\toprule
& \multicolumn{3}{c}{rel-f1} & \multicolumn{2}{c}{rel-hm} & \multicolumn{3}{c}{rel-stack} & \multicolumn{3}{c}{rel-trial} \\
\cmidrule(lr){2-4} \cmidrule(lr){5-6} \cmidrule(lr){7-9} \cmidrule(lr){10-12}
\textbf{Method} & d-dnf & d-top3 & d-pos & u-churn & i-sales & u-engage & u-badge & p-votes & st-outc & st-adv & s-succ \\
\textbf{Metric} & AUC & AUC & R$^2$ & AUC & R$^2$ & AUC & AUC & R$^2$ & AUC & R$^2$ & R$^2$ \\
\midrule
LightGBM             & 0.686 & 0.739 & 0.068 & 0.552 & $-$0.017 & 0.634 & 0.634 & $-$0.034 & 0.701 & 0.307 & $-$0.336 \\
GNN-from-scratch     & 0.720 & 0.816 & 0.129 & 0.679 & 0.227 & \textbf{0.896} & 0.879 & \textbf{0.161} & 0.674 & 0.155 & $-$0.028 \\
Juice                & 0.725 & 0.762 & 0.174 & \textbf{0.703} & \textbf{0.331} & 0.884 & 0.838 & 0.140 & \textbf{0.715} & \textbf{0.470} & $-$0.018 \\
RT-synthetic         & 0.764 & 0.824 & 0.315 & 0.593 & 0.123 & 0.781 & 0.716 & 0.150 & 0.493 & $-$0.017 & \textbf{0.145} \\
RT (from scratch)    & \textbf{0.769} & \textbf{0.827} & \textbf{0.337} & 0.681 & 0.296 & 0.846 & \textbf{0.885} & 0.132 & 0.686 & 0.413 & $-$0.093 \\
\bottomrule
\end{tabular}%
}
\end{table}

%% file: tables/exp1_full.tex
\begin{table*}[t]
\centering
\caption{\small EXP1: classification results. Best result per task is \textbf{bolded}. The Rank column averages each method's per-task rank across all tasks; NS/OOM cells are imputed at the worst rank on that task (lower is better). \emph{JUICE (supp$\times$8)} is the same JUICE / DFS feature pipeline wrapped in a support-side ensemble that draws 8 independent supports and averages predictions.}
\label{tab:exp1_classification}
\resizebox{\textwidth}{!}{%
\begin{tabular}{l ccccccccccccccccc}
\toprule
& \multicolumn{2}{c}{rel-event} & \multicolumn{2}{c}{rel-f1} & rel-hm & \multicolumn{2}{c}{rel-stack} & rel-trial & rel-arxiv & ieee-cis & \multicolumn{2}{c}{rel-salt} & \multicolumn{2}{c}{rel-ratebeer} & \multicolumn{2}{c}{rel-avito} & \\
\cmidrule(lr){2-3} \cmidrule(lr){4-5} \cmidrule(lr){6-6} \cmidrule(lr){7-8} \cmidrule(lr){9-9} \cmidrule(lr){10-10} \cmidrule(lr){11-11} \cmidrule(lr){12-13} \cmidrule(lr){14-15} \cmidrule(lr){16-17}
\textbf{Method} & u-repeat & u-ignore & d-dnf & d-top3 & u-churn & u-engage & u-badge & st-outc & p-cite & t-fraud & i-incot & s-shipc & b-churn & br-dorm & u-clicks & u-visits & \textbf{Rank} \\
\textbf{Metric} & AUC & AUC & AUC & AUC & AUC & AUC & AUC & AUC & AUC & AUC & MRR & MRR & AUC & AUC & AUC & AUC & \\
\midrule
\multicolumn{18}{l}{\textit{Training from Scratch}} \\
\midrule
RDL-GNN  & 0.795 & 0.723 & 0.720 & 0.816 & 0.679 & 0.896 & 0.879 & 0.674 & 0.824 & 0.847 & 0.769 & 0.730 & 0.734 & 0.807 & 0.657 & 0.646 & 6.38 \\
NBFNet   & \textbf{0.812} & 0.802 & 0.603 & 0.766 & 0.692 & 0.901 & 0.884 & 0.714 & 0.825 & 0.857 & 0.768 & \textbf{0.733} & 0.735 & 0.811 & 0.671 & 0.668 & 4.59 \\
RelGT    & 0.780 & 0.823 & 0.759 & 0.835 & 0.693 & 0.905 & 0.863 & 0.686 & 0.819 & OOM   & 0.748 & 0.717 & NS    & 0.779 & 0.601 & 0.661 & 6.44 \\
RT       & 0.743 & 0.841 & 0.769 & 0.827 & 0.681 & 0.846 & \textbf{0.885} & 0.686 & 0.788 & 0.000 & NS    & NS    & \textbf{0.831} & 0.751 & 0.530 & \textbf{0.679} & 7.72 \\
RelGNN   & 0.796 & 0.789 & 0.727 & 0.824 & \textbf{0.708} & 0.894 & 0.875 & \textbf{0.729} & 0.823 & 0.847 & 0.769 & 0.730 & 0.734 & 0.807 & \textbf{0.672} & 0.664 & \textbf{4.50} \\
\midrule
\multicolumn{18}{l}{\textit{Training-free RFM}} \\
\midrule
JUICE    & 0.758 & 0.896 & 0.725 & 0.762 & 0.703 & 0.884 & 0.831 & 0.715 & 0.821 & 0.517 & 0.611 & 0.394 & 0.777 & 0.811 & 0.667 & 0.657 & 6.66 \\
JUICE (supp$\times$8) & 0.755 & 0.889 & 0.720 & 0.762 & 0.687 & 0.896 & 0.839 & 0.717 & \textbf{0.826} & 0.527 & 0.639 & 0.362 & 0.775 & \textbf{0.815} & 0.624 & 0.651 & 6.69 \\
\midrule
\multicolumn{18}{l}{\textit{Pre-trained Prior}} \\
\midrule
RDB-PFN  & 0.775 & \textbf{0.904} & 0.723 & 0.833 & 0.683 & 0.873 & 0.839 & 0.671 & 0.803 & 0.552 & 0.689 & 0.465 & 0.815 & 0.796 & 0.569 & 0.642 & 7.44 \\
RT-Plurel & 0.537 & 0.795 & 0.764 & 0.824 & 0.593 & 0.781 & 0.716 & 0.493 & 0.740 & 0.744 & NS    & NS    & 0.616 & 0.631 & 0.438 & 0.622 & 11.16 \\
\midrule
\multicolumn{18}{l}{\textit{Training-based (Official)}} \\
\midrule
Griffin  & 0.790 & 0.806 & 0.712 & 0.799 & 0.679 & 0.838 & 0.839 & 0.693 & 0.777 & OOM   & OOM   & OOM   & OOM   & OOM   & 0.647 & 0.620 & 10.16 \\
KumoRFMv1  & 0.768 & 0.826 & 0.799 & 0.822 & 0.652 & \textbf{0.906} & 0.854 & 0.681 & 0.767 & 0.841 & 0.615 & 0.512 & 0.760 & 0.813 & 0.624 & 0.617 & 7.22 \\
\midrule
\multicolumn{18}{l}{\textit{Ours}} \\
\midrule
\textbf{OpenRFM (tabicl)} & 0.736 & 0.868 & 0.808 & \textbf{0.899} & 0.686 & 0.885 & 0.863 & 0.608 & 0.810 & 0.903 & 0.680 & 0.466 & 0.796 & 0.775 & 0.623 & 0.646 & 6.69 \\
\textbf{OpenRFM (tabpfn)} & 0.757 & 0.869 & \textbf{0.812} & 0.870 & 0.689 & 0.884 & 0.864 & 0.634 & 0.810 & \textbf{0.926} & \textbf{0.831} & 0.490 & 0.809 & 0.791 & 0.639 & 0.646 & 5.38 \\
\bottomrule
\end{tabular}%
}
\end{table*}

\begin{table*}[t]
\centering
\caption{\small EXP1: regression results (R$^2$). Best result per task is \textbf{bolded}. The Rank column averages each method's per-task rank across all tasks; NS/OOM cells are imputed at the worst rank on that task (lower is better). \emph{JUICE (supp$\times$8)} is the same JUICE / DFS feature pipeline wrapped in a support-side ensemble that draws 8 independent supports and averages predictions.}
\label{tab:exp1_regression}
\resizebox{\textwidth}{!}{%
\begin{tabular}{l ccccccccc}
\toprule
& rel-event & rel-f1 & rel-hm & rel-stack & \multicolumn{2}{c}{rel-trial} & rel-arxiv & rel-avito & \\
\cmidrule(lr){2-2} \cmidrule(lr){3-3} \cmidrule(lr){4-4} \cmidrule(lr){5-5} \cmidrule(lr){6-7} \cmidrule(lr){8-8} \cmidrule(lr){9-9}
\textbf{Method} & u-attend & d-pos & i-sales & p-votes & st-adv & s-succ & a-pub & a-ctr & \textbf{Rank} \\
\midrule
\multicolumn{10}{l}{\textit{Training from Scratch}} \\
\midrule
RDL-GNN  & 0.129 & 0.129 & 0.227 & 0.161 & 0.155 & $-$0.028 & 0.249 & 0.190 & 7.00 \\
NBFNet   & 0.112 & 0.147 & 0.230 & 0.162 & 0.155 & $-$0.008 & 0.210 & 0.176 & 6.94 \\
RelGT    & $-$0.055 & 0.124 & 0.226 & 0.131 & 0.170 & $-$0.288 & 0.265 & 0.113 & 9.75 \\
RT       & 0.104 & 0.337 & 0.296 & 0.132 & 0.413 & $-$0.093 & 0.218 & 0.218 & 6.13 \\
RelGNN   & 0.144 & $-$0.031 & 0.230 & 0.164 & 0.118 & $-$0.078 & $-$0.169 & 0.215 & 7.94 \\
\midrule
\multicolumn{10}{l}{\textit{Training-free RFM}} \\
\midrule
JUICE    & 0.110 & 0.174 & 0.331 & 0.140 & 0.470 & $-$0.018 & 0.246 & \textbf{0.320} & 5.31 \\
JUICE (supp$\times$8) & 0.112 & 0.174 & 0.281 & 0.190 & 0.501 & 0.012 & 0.271 & 0.319 & \textbf{4.00} \\
\midrule
\multicolumn{10}{l}{\textit{Pre-trained Prior}} \\
\midrule
RDB-PFN  & 0.081 & 0.130 & 0.190 & 0.155 & 0.212 & 0.034 & 0.142 & 0.141 & 8.00 \\
RT-Plurel & 0.046 & 0.315 & 0.123 & 0.150 & $-$0.017 & 0.145 & 0.121 & 0.019 & 9.38 \\
\midrule
\multicolumn{10}{l}{\textit{Training-based (Official)}} \\
\midrule
Griffin  & \textbf{0.145} & 0.192 & 0.216 & OOM   & \textbf{0.573} & $-$0.218 & 0.126 & 0.059 & 8.00 \\
KumoRFMv1  & $-$0.050 & \textbf{0.464} & $-$0.101 & $-$0.190 & 0.511 & $-$0.662 & 0.139 & 0.174 & 8.75 \\
\midrule
\multicolumn{10}{l}{\textit{Ours}} \\
\midrule
\textbf{OpenRFM (tabicl)} & 0.057 & 0.390 & 0.287 & 0.261 & 0.202 & 0.137 & 0.249 & $-$0.043 & 5.81 \\
\textbf{OpenRFM (tabpfn)} & 0.075 & 0.379 & \textbf{0.533} & \textbf{0.270} & 0.317 & \textbf{0.224} & \textbf{0.307} & 0.111 & \textbf{4.00} \\
\bottomrule
\end{tabular}%
}
\end{table*}